\newcounter{lem_counter}
\newcounter{pro_counter}
\newtheorem{proposition}[pro_counter]{Proposition}
\newtheorem{lemma}[lem_counter]{Lemma}
\title{ O$^2$TD: (Near)-Optimal Off-Policy TD Learning}
\author{
{Bo Liu} \\
{Auburn University}\\
{boliu@auburn.edu}
\And
{Daoming Lyu} \\
{Auburn University}\\
{dzl0053@auburn.edu}
\And
{Wen Dong}\\
{University of Buffalo}\\
{wendong@buffalo.edu}
\And
{Saad Biaz}\\
{Auburn University}\\
{biazsaa@auburn.edu}
}
\begin{document}

\maketitle
\begin{abstract}
Temporal difference learning and Residual Gradient methods are the most widely used temporal difference based learning algorithms; however, it has been shown that none of their objective functions is optimal w.r.t approximating the true value function $V$. Two novel algorithms are proposed to approximate the true value function $V$. This paper makes the following contributions:
\begin{itemize}
\item A batch algorithm that can help find the approximate optimal off-policy prediction of the true value function $V$.
\item A linear computational cost (per step) near-optimal algorithm that can learn from a collection of off-policy samples.
\item A new perspective of the emphatic temporal difference learning which bridges the gap between off-policy optimality and off-policy stability.

\end{itemize}
\end{abstract}




\section{Introduction}

Temporal difference (\textbf{TD}) learning is a widely used method in reinforcement learning. There are two fundamental problems in temporal difference learning. 
The \textbf{first problem} is the off-policy stability. Although TD converges when samples are drawn “on-policy” by sampling from the Markov chain underlying a policy in a Markov decision process, it can be shown to be divergent when samples are drawn “off-policy.” Off-policy stable methods are of wider applications since they can learn while executing an exploratory policy, learn from demonstrations, and learn multiple tasks in parallel. 
The \textbf{second problem} is the optimality with function approximation. An accurate prediction of the value function will greatly help improve the policy optimization, which is the ultimate goal of reinforcement learning tasks. On the other hand, a bad value function prediction will lead to a low-quality policy \citep{sutton-barto:book}.

Several different approaches have been explored to address the problem of
off-policy temporal difference learning. Baird's residual gradient (\textbf{RG}) method \citep{Baird:ResidualAlgorithms1995} is the first approach with linear complexity per step, but it requires double sampling and also converges to an inferior solution. 
 \cite{gordon1996stable} proposed the ``averager'' method,  which needs to store many training examples, and thus is not practical for large-scale applications. The off-policy LSTD \citep{yu2010:icml} is off-policy convergent, but its per-step computational complexity
is quadratic in the number of parameters $d$ of the function approximator.
\cite{Sutton:GTD1:2008,tdc:2009} proposed the family of gradient-based temporal difference (\textbf{GTD}) algorithms which are proven to be asymptotically off-policy convergent using stochastic approximation~\citep{borkar:book}.

Another direction of temporal difference learning, optimal temporal difference learning, seems to draw relatively insufficient attention. 
It is well-known that the asymptotic solutions of TD and GTD are not the true value function $V$, but the solution of a projected fixed point equation \citep{tdc:2009}. On the other hand, the residual gradient method converges to another solution, which is often inferior to the TD solution. However, as pointed out by \cite{Scherrer:ObliqueProjection},  both the TD and residual gradient method can be unified as the oblique projection of the true value function $V$ with different oblique projection directions, and \textit{neither }of them is optimal in the sense of approximating the true value function $V$.
To the best of our knowledge, the most relevant to our work is the optimal Dantzig Selector TD learning \citep{liu2016dantzig}, which aims to find the best denoising matrix for the purpose of feature selection, when the number of samples $n$ is much larger than the number of function approximation parameters $d$.

This paper attempts to improve the prediction of value function based on the technique of oblique projection. Here is a roadmap for the rest of the paper.
Section~\ref{sec:problem} introduces the relationship between the optimal approximation of the true value function $V$ with the oblique projected fixed point equations, which reduces the problem to finding the optimal oblique projection direction. Unfortunately, this cannot be directly computed. To this end, Section~\ref{sec:alg} proposes an approximation criterion and two algorithms, i.e., a state-aggregated batch algorithm and a state-weighted stochastic algorithm. Related work is discussed in Section~\ref{sec:related}.
Section~\ref{sec:experimental} presents the experimental results evaluating the effectiveness of the proposed approaches.


\section{Preliminary}

Reinforcement Learning (\textbf{RL})~\citep{ndp:book,sutton-barto:book} is a class of learning problems in which an agent interacts with an unfamiliar, dynamic, and stochastic environment, where the agent's goal is to optimize some measure of its long-term performance. This interaction is conventionally modeled as a Markov decision process (\textbf{MDP}). An MDP is defined as the tuple $({\mathcal{S},\mathcal{A},P_{ss'}^{a},R,\gamma})$, where $\mathcal{S}$ and $\mathcal{A}$ are the sets of states and actions, the transition kernel $P_{ss'}^{a}$ specifying the probability of transition from state $s\in\mathcal{S}$ to state $s'\in\mathcal{S}$ by taking action $a\in\mathcal{A}$, $R(s,a):\mathcal{S}\times\mathcal{A}\to\mathbb{R}$ is the reward function bounded by $R_{\max}$., and $0\leq\gamma<1$ is a discount factor. A stationary policy $\pi:\mathcal{S}\times\mathcal{A}\to\left[{0,1}\right]$ is a probabilistic mapping from states to actions. The main objective of a RL algorithm is to find an optimal policy. In order to achieve this goal, a key step in many algorithms is to calculate the value function of a given policy $\pi$, i.e.,~$V^{\pi}:\mathcal{S}\to\mathbb{R}$, a process known as {\em policy evaluation}. It is known that $V^\pi$ is the unique fixed-point of the {\em Bellman operator} $T^\pi$, i.e.,
\begin{equation}
\label{eq:BellmanEq}
V^\pi = T^\pi V^\pi = R^\pi + \gamma P^\pi V^\pi,
\end{equation}
where $R^\pi$ and $P^\pi$ are respectively the reward function and transition kernel of the Markov chain induced by policy $\pi$. In Eq.~\ref{eq:BellmanEq}, we may think of $V^\pi$ as a $|\mathcal{S}|$-dimensional vector and write everything in vector/matrix form. We also denote $L^\pi := I - \gamma {P^\pi }$.
In the following, to simplify the notation, we often drop the dependence of $T^\pi$, $V^\pi$, $R^\pi$, and $P^\pi$ to $\pi$. 

We denote by $\pi_b$, the behavior policy that generates the data, and by $\pi$, the target policy that we would like to evaluate. They are the same in the on-policy setting and different in the off-policy scenario. For each state-action pair $(s_i,a_i)$, such that $\pi_b(a_i|s_i)>0$, we define the importance-weighting factor $\rho_i = \pi(a_i|s_i)/\pi _b(a_i|s_i)$ with $\rho_{\max}\geq 0$ being its maximum value over the state-action pairs.

When $\mathcal{S}$ is large or infinite, we often use a linear approximation architecture for $V^\pi$ with parameters $\theta\in\mathbb{R}^d$ and $K$-bounded basis functions $\{\varphi_i\}_{i=1}^d$, i.e.,~$\varphi_i:\mathcal{S}\rightarrow\mathbb{R}$ and $\max_i||\varphi_i||_\infty\leq K$. We denote by $\phi(\cdot) := \big(\varphi_1(\cdot),\ldots,\varphi_d(\cdot)\big)^\top$ the feature vector and by $\mathcal{F}$ the linear function space spanned by the basis functions $\{\varphi_i\}_{i=1}^d$, i.e.,~$\mathcal{F}=\big\{f_\theta\mid\theta\in\mathbb{R}^d\;\text{and}\;f_\theta(\cdot)=\phi(\cdot)^\top\theta\big\}$. We may write the approximation of $V$ in $\mathcal{F}$ in the vector form as $\hat{v}=\Phi\theta$, where $\Phi$ is the $|\mathcal{S}|\times d$ feature matrix, and we denote 
\begin{equation}
\Delta := (I - \gamma {P^\pi })\Phi  = {L^\pi }\Phi.
\label{eq:ac}
\end{equation}
When only $n$ training samples of the form $\mathcal{D}=\big\{\big(s_i,a_i,r_i=r(s_i,a_i),s'_i\big)\big\}_{i=1}^n,\;s_i\sim\xi,\;a_i\sim\pi_b(\cdot|s_i),\;s'_i\sim P(\cdot|s_i,a_i)$, are available ($\xi$ is a vector representing the probability distribution over the state space $\mathcal{S}$), 
we denote by $\delta_i(\theta) := r_i+\gamma\phi_i^{'\top}\theta-\phi_i^\top\theta$, the TD error for the $i$-th sample $(s_i,r_i,s'_i)$ and define $\Delta\phi_i=\rho_i (\phi_i-\gamma\phi'_i)$. 
We also denote the sample-based state-aggregated estimation of $\Delta$ (resp. $R$), termed as $\hat{\Delta}$ (resp. $\hat{R}$), i.e.,  given sample set $\mathcal{D}$,  the $i$-th and $j$-th samples are aggregated if $s_i=s_j$, which is a standard approach used in state aggregation methods \citep{stateagg:singh1995reinforcement}.
Finally, we define the matrices $C$ as $C := \mathbb{E}[\phi_i\phi_i^\top]$, where the expectations are w.r.t.~$\xi$ and $P^{\pi_b}$. We also denote by $\Xi$, the diagonal matrix whose elements are $\xi(s)$, and ${\xi _{\max }} := {\max _s}\xi (s)$. For each sample $i$ in the training set $\mathcal{D}$, the unbiased estimate of $C$ is $\hat{C}_i := \phi_i\phi_i^\top$.



\section{Problem Formulation}
\label{sec:problem}
This section presents the motivation of this research, i.e., exploring the possible optimal value function approximation in a model-free reinforcement learning setting. 

It is evident that given the functional space $\mathcal{F}$ and the approximation of $V$ in $\mathcal{F}$ in the vector form represented as $\hat{v}=\Phi\theta$, the optimal approximation is $ v^* = \Pi V$, where $\Pi  = \Phi {({\Phi ^ \top }\Xi \Phi )^{ - 1}}{\Phi ^ \top }\Xi $ is the weighted least-squares projection weighted by the state distribution $\Xi$. This is obtained  from $\arg \mathop {\min }\limits_{\hat v}  ||\hat v - V||_\xi^2$. 
It is also well-known that the TD solution $\hat{v}_{TD}$ does not converge to $v^*$ but to the unique fixed-point solution of $\hat v = \Pi T\hat v$.  
Several intuitive questions arise here, such as 
\begin{enumerate}
\item What is the approximation error bound between $\hat{v}_{TD}$ and $V$?
\item What is the relation of representation between $\hat{v}_{TD}$ and $V$, i.e., if $\hat{v}_{TD}$ can be analytically represented by $V$? 
\end{enumerate}

The first question has been answered in \citep{tsitsiklis-roy:tdfun}, where an upper bound was given as 
$
||V - {{\hat v}_{TD}}|{|_\xi } \le \frac{1}{{\sqrt {1 - {\gamma ^2}} }}||V - {v^*}|{|_\xi }.
$ 
The answer to the second question requires the notion of oblique projection defined in Section~\ref{sec:ob}. 

\subsection{Oblique Projection and Optimal Projection}
\label{sec:ob}


 The oblique projection tuple ($\Phi, X$) is
defined as follows, where $X$ is a matrix with the same size as $\Phi$.

\textbf{Definition.}  
The \textit{Oblique Projection} operator $\Pi _\Phi ^X$ 
is defined as 
\begin{equation}
\Pi _\Phi ^X = \Phi {({X^ \top }\Phi )^{ - 1}}{X^ \top },
\end{equation}
which specifies a projection orthogonal to $span(X)$ and onto $span(\Phi)$. It can be easily deducted that the weighted orthogonal projection $\Phi$ can be written as $\Pi  = \Pi _\Phi ^{\Xi \Phi}$.

It is easy to verify that the projected fixed point equation in temporal difference learning, $\hat v = \Pi {T^\pi }(\hat v)$, can be extended to a more general setting by extending the weighted least-squares projection operator to oblique projection operator as
\begin{align}
\hat v = \Pi _\Phi ^X{T^\pi }(\hat v),
\label{eq:obfp}
\end{align}
It turns out that both TD and RG solutions are oblique projections with different $X$, where ${X_{TD}} = \Xi \Phi ,{X_{RG}} = \Xi L^\pi \Phi $ \citep{Scherrer:ObliqueProjection}.
One may be interested in the relation between the true value function $V$ and the solutions of the fixed-point equation. The relation is shown in Lemma~\ref{lem:1}.
\begin{lemma}\citep{Scherrer:ObliqueProjection}
The solution of the oblique projected fixed-point equation $\hat v = \Pi _\Phi ^XT(\hat v)$ w.r.t the oblique projection $\Pi _\Phi ^X$ can be represented as the oblique projection $\Pi _\Phi ^{{L^{\pi  \top }}X}$ of the true value function $V$, i.e., 
\begin{align}
\hat v = \Pi _\Phi ^XT^\pi(\hat v) = \Pi _\Phi ^{{L^{\pi \top}}X}V,
\end{align}
where $L^\pi = (I- \gamma P^\pi)$.
\label{lem:1}
\end{lemma}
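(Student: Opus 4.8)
The plan is to start from the fixed-point equation $\hat v = \Pi_\Phi^X T^\pi(\hat v)$ and massage both sides until the true value function $V$ appears, using the Bellman equation $V = R + \gamma P^\pi V$, equivalently $R = L^\pi V$ with $L^\pi = I - \gamma P^\pi$. Since $\hat v$ lies in $\mathrm{span}(\Phi)$, write $\hat v = \Phi\theta$ for some $\theta \in \mathbb{R}^d$. Expanding $T^\pi(\hat v) = R + \gamma P^\pi \Phi\theta$ and applying the definition $\Pi_\Phi^X = \Phi(X^\top\Phi)^{-1}X^\top$, the fixed-point condition becomes $\Phi\theta = \Phi(X^\top\Phi)^{-1}X^\top(R + \gamma P^\pi\Phi\theta)$. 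Assuming $\Phi$ has full column rank, this is equivalent to $(X^\top\Phi)\theta = X^\top R + \gamma X^\top P^\pi\Phi\theta$, i.e. $X^\top(\Phi - \gamma P^\pi\Phi)\theta = X^\top R$, which is precisely $X^\top L^\pi\Phi\theta = X^\top R$. Recalling $\Delta = L^\pi\Phi$ from Eq.~\eqref{eq:ac}, this reads $(X^\top\Delta)\theta = X^\top R$.

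Next I would rewrite the right-hand side using the Bellman equation in the form $R = L^\pi V$, so that $X^\top L^\pi\Phi\theta = X^\top L^\pi V$. The key observation is that $X^\top L^\pi = (L^{\pi\top}X)^\top$, so the equation is $\big((L^{\pi\top}X)^\top\Phi\big)\theta = (L^{\pi\top}X)^\top V$. Solving for $\theta$ (here one needs $(L^{\pi\top}X)^\top\Phi$ to be invertible, which is exactly the well-posedness condition for the oblique projection $\Pi_\Phi^{L^{\pi\top}X}$) gives $\theta = \big((L^{\pi\top}X)^\top\Phi\big)^{-1}(L^{\pi\top}X)^\top V$, hence $\hat v = \Phi\theta = \Phi\big((L^{\pi\top}X)^\top\Phi\big)^{-1}(L^{\pi\top}X)^\top V = \Pi_\Phi^{L^{\pi\top}X}V$, which is the claimed identity.

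The argument is essentially a chain of equivalent linear-algebraic manipulations, so there is no deep obstacle; the main points requiring care are the invertibility assumptions. Specifically, one must assume $\Phi$ has full column rank so that $\Phi\theta = \Phi\theta'$ implies $\theta = \theta'$, and one must assume both $X^\top\Phi$ and $(L^{\pi\top}X)^\top\Phi$ are nonsingular so that the two oblique projections in the statement are well-defined in the first place. I would state these as standing assumptions (they are implicit in the very notation $\Pi_\Phi^X$ and $\Pi_\Phi^{L^{\pi\top}X}$) and then the equivalence of the displayed equations is immediate. It is also worth noting as a sanity check that substituting $X = X_{TD} = \Xi\Phi$ recovers the classical statement that the TD fixed point equals $\Pi_\Phi^{L^{\pi\top}\Xi\Phi}V$, and substituting $X = X_{RG} = \Xi L^\pi\Phi$ gives the residual-gradient solution as $\Pi_\Phi^{L^{\pi\top}\Xi L^\pi\Phi}V$, consistent with \citep{Scherrer:ObliqueProjection}.
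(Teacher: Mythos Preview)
Your argument is correct and is exactly the standard derivation of this identity: write $\hat v=\Phi\theta$, reduce the fixed-point equation to $X^\top L^\pi\Phi\theta=X^\top R$, substitute $R=L^\pi V$, and recognize the result as $\Pi_\Phi^{L^{\pi\top}X}V$. The paper itself does not supply a proof but simply refers the reader to \citep{Scherrer:ObliqueProjection}, so there is nothing further to compare; your proposal fills in precisely what that reference contains, with the appropriate invertibility caveats made explicit.
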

\begin{proof}
Please refer to \cite{Scherrer:ObliqueProjection} for a detailed proof.
\end{proof}
\textbf{Remark}: Lemma~\ref{lem:1} helps to identify the equivalence between oblique projection of the true value function $V$, i.e., $\Pi _{\Phi }^{L^{\pi \top} X} V$ and the solution of the oblique projected fixed-point equation, i.e., $\hat v = \Pi _\Phi ^X T^\pi \hat v$. Figure~\ref{fig:oblique} is an illustration of the oblique projection.
\begin{figure}[h]
\centering
\includegraphics[width=.5\textwidth,height=1.75in]{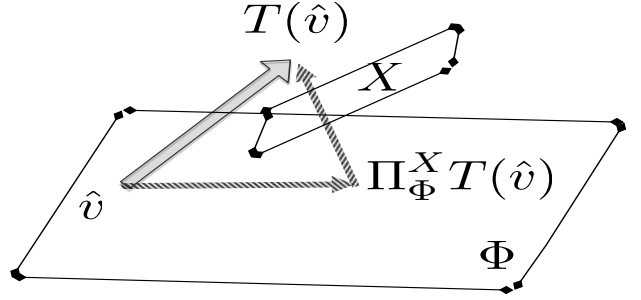}
\caption{An Illustration of Oblique Projected TD}
\label{fig:oblique}
\end{figure}

An intuitive question to ask is what the best oblique projection $X$ is. Is it either TD, RG, or some interpolation between them, or none of the above? To answer this question, we present the following proposition, which is the workhorse of this paper. 
\begin{lemma} \citep{Scherrer:ObliqueProjection}
\label{lem:psibest}
Given $\Phi$, if $V$ does not lie in $span(\Phi)$, the optimal approximation is $v^* = \Pi V = \Phi {({\Phi ^ \top }\Xi \Phi )^{ - 1}}{\Phi ^ \top }\Xi V$, and the corresponding oblique projection $X^*$ in the fixed point equation 
\begin{equation}
{v^*} = \Pi _\Phi ^{{X^*}}T{v^*}
\label{eq:v*}
\end{equation}
is 
\begin{align}
{X^*} = {({L^{\pi \top} })^{ - 1}}\Xi \Phi.
\label{eq:xoptimal}
\end{align}
\end{lemma}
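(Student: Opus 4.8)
The plan is to reduce everything to Lemma~\ref{lem:1}. That lemma identifies the solution of the oblique projected fixed-point equation $\hat v = \Pi_\Phi^X T^\pi \hat v$ with $\Pi_\Phi^{L^{\pi\top}X} V$. Hence, to make $v^* = \Pi V$ the solution of \eqref{eq:v*}, it suffices to choose $X^*$ so that the oblique projection $\Pi_\Phi^{L^{\pi\top}X^*}$ coincides with the weighted orthogonal projection $\Pi = \Pi_\Phi^{\Xi\Phi}$.

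First I would record the elementary fact that $\Pi_\Phi^X = \Phi(X^\top\Phi)^{-1}X^\top$ depends on $X$ only through $\mathrm{span}(X)$: replacing $X$ by $XM$ for an invertible $M$ leaves $\Pi_\Phi^X$ unchanged by a one-line cancellation of $(M^\top)^{-1}M^\top$. Consequently, to force $\Pi_\Phi^{L^{\pi\top}X^*} = \Pi_\Phi^{\Xi\Phi}$ it is enough to solve $L^{\pi\top}X^* = \Xi\Phi$. Since $0\le\gamma<1$ and $P^\pi$ is a stochastic matrix (spectral radius $1$), the matrix $L^\pi = I-\gamma P^\pi$, and hence $L^{\pi\top}$, is invertible, so this linear equation has the solution $X^* = (L^{\pi\top})^{-1}\Xi\Phi$, which is exactly \eqref{eq:xoptimal}. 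Invoking Lemma~\ref{lem:1} with this $X^*$ then gives that the solution of $\hat v = \Pi_\Phi^{X^*}T^\pi\hat v$ equals $\Pi_\Phi^{\Xi\Phi}V = \Pi V = v^*$.

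As a sanity check and to expose the well-definedness hypotheses, I would also verify the substitution by hand. Writing $\hat v = \Phi\theta$ and using $(X^*)^\top = \Phi^\top\Xi(L^\pi)^{-1}$, the fixed-point equation $\Phi\theta = \Pi_\Phi^{X^*}(R+\gamma P^\pi\Phi\theta)$ reduces, via $(X^*)^\top L^\pi\Phi = \Phi^\top\Xi\Phi$ and $(X^*)^\top R = \Phi^\top\Xi(L^\pi)^{-1}R = \Phi^\top\Xi V$ (the last equality from the Bellman equation $V=(L^\pi)^{-1}R$), to $\Phi^\top\Xi\Phi\,\theta = \Phi^\top\Xi V$, whose solution is $\theta^\ast = (\Phi^\top\Xi\Phi)^{-1}\Phi^\top\Xi V$, i.e. $\Phi\theta^\ast = \Pi V = v^*$. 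This computation makes explicit the two nonsingularity conditions one uses implicitly: $\Phi^\top\Xi\Phi$ invertible (independent columns of $\Phi$, which also makes the fixed point unique) and $(X^*)^\top\Phi = \Phi^\top\Xi(L^\pi)^{-1}\Phi$ invertible so that $\Pi_\Phi^{X^*}$ is defined.

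I expect the one genuinely delicate point to be the precise meaning of ``the corresponding $X^*$''. The algebra above produces an $X^*$ that equalizes the two projection \emph{operators}, whereas \eqref{eq:v*} literally asks only that $\Pi_\Phi^{L^{\pi\top}X^*}$ agree with $\Pi$ on the single vector $V$, so at the level of individual matrices $X$ the solution is not unique. The hypothesis $V\notin\mathrm{span}(\Phi)$ is what keeps the statement meaningful: if $V\in\mathrm{span}(\Phi)$ then $v^*=V=T^\pi v^*$, every admissible $X$ solves \eqref{eq:v*}, and no distinguished $X^*$ exists. I would therefore present $X^* = (L^{\pi\top})^{-1}\Xi\Phi$ as the canonical representative that realizes $v^*$ by matching the projection operators themselves, and flag this interpretation rather than assert a stronger uniqueness.
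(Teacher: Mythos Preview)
Your proposal is correct and follows essentially the same approach as the paper: invoke Lemma~\ref{lem:1} to identify the fixed-point solution with $\Pi_\Phi^{L^{\pi\top}X^*}V$, then set $L^{\pi\top}X^* = \Xi\Phi$ so that this oblique projection coincides with $\Pi$, and solve to obtain $X^* = (L^{\pi\top})^{-1}\Xi\Phi$. Your additional remarks on the invertibility of $L^{\pi\top}$, the direct verification via the normal equations, and the non-uniqueness of $X^*$ (with the role of the hypothesis $V\notin\mathrm{span}(\Phi)$) go beyond what the paper records but are welcome clarifications rather than a different argument.
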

\begin{proof}

From Lemma~\ref{lem:1}, we know that $X^*$ satisfies ${v^*} = \Pi _\Phi ^{({L^{\pi  \top }}){X^*}}V$. Let $\Pi _\Phi ^{({L^{\pi  \top }}){X^*}} = \Pi $, we have
\begin{equation}
({L^{\pi \top} }){X^*} = \Xi \Phi,
\end{equation}
and thus we can have Eq.~\eqref{eq:xoptimal}, which completes the proof.
\end{proof}
%


Although the analytical formulation of $X^*$ is clear, it is intractable to compute. The major reason is that  ${(L^{\pi \top})^{-1} }$ is computational prohibitive since the exact $P^\pi$ is not known. 
This paper will present techniques to compute $X^*$ approximately in the following.

\section{Algorithm Design}
\label{sec:alg}
Given the knowledge of the oblique projection and the problem of the computational intractability to compute $X^*$, a criterion is proposed to approximate $X^*$. Based on this criteria, two algorithms are proposed. The first is based on state-aggregated two-stage approximation, and the second is based on state-dependent diagonalized approximation.

\subsection{Approximate Criteria}
Before presenting the algorithm design, we first introduce a simple but important property of the optimal projection matrix  $X^*$. Notice since $X^* = {(L^{\pi \top})^{-1} }\Xi \Phi $, and thus we have
\begin{equation}
{\Delta ^ \top }{X^*} = {\Phi ^ \top }({L^{\pi  \top }})({({L^{\pi  \top }})^{ - 1}}\Xi \Phi ) = {\Phi ^ \top }\Xi \Phi  = C.
\label{eq:p1}
\end{equation} 
Motivated by this, Proposition~\ref{pro:fundamental} is presented to formulate the cornerstone of this paper.
\begin{proposition}
For state aggregated $\hat{\Delta}$, there is
\begin{align}
\mathbb{E}_{\pi_b}[{\hat \Delta   }] = L^\pi \Phi,
\label{eq:fundamental}
\end{align}
and thus for the optimal oblique projection $X^*$ and the corresponding $v^*=\Phi \theta^*$, the following holds
\begin{align}
\mathbb{E}_{\pi_b}[{\hat \Delta   }]^\top {X^*}  
&= \mathbb{E}_{\xi}[\hat{C}]
\\
X^{*\ \top}\mathbb{E}_{\pi_b}[\hat\Delta]\theta^*
&= X^{*\ \top}\mathbb{E}_{\xi}[\hat R].
\label{eq:theta}
\end{align}
\label{pro:fundamental}
\end{proposition}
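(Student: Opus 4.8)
The plan is to prove the three claims in sequence. The key identity is~\eqref{eq:fundamental}; once it is available, the other two reduce to short algebra using the closed form $X^* = (L^{\pi\top})^{-1}\Xi\Phi$ from Lemma~\ref{lem:psibest} and property~\eqref{eq:p1}. For~\eqref{eq:fundamental} I would argue row by row. Fix a state $s$; the row of $\hat\Delta$ attached to $s$ is the aggregate (empirical average) of the per-sample vectors $\Delta\phi_i = \rho_i(\phi_i - \gamma\phi'_i)$ over samples with $s_i = s$, so it suffices to compute $\mathbb{E}_{\pi_b}[\rho_i(\phi_i-\gamma\phi'_i)\mid s_i=s]$. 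Since $\phi_i=\phi(s)$ is deterministic given $s$ and $\mathbb{E}_{\pi_b}[\rho_i\mid s_i=s]=\sum_a \pi_b(a\mid s)\tfrac{\pi(a\mid s)}{\pi_b(a\mid s)}=\sum_a\pi(a\mid s)=1$, the first term contributes $\phi(s)^\top$; for the second term $\mathbb{E}_{\pi_b}[\rho_i\phi_i'^\top\mid s_i=s]=\sum_a\pi(a\mid s)\sum_{s'}P^a_{ss'}\phi(s')^\top$, which is exactly the $s$-th row of $P^\pi\Phi$. Hence the $s$-th row equals $\phi(s)^\top-\gamma(P^\pi\Phi)(s,\cdot)=\big((I-\gamma P^\pi)\Phi\big)(s,\cdot)$; stacking over $s$ gives $\mathbb{E}_{\pi_b}[\hat\Delta]=(I-\gamma P^\pi)\Phi=L^\pi\Phi=\Delta$, which is~\eqref{eq:fundamental} and also records that $\mathbb{E}_{\pi_b}[\hat\Delta]=\Delta$.

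For the second identity, combine the above with~\eqref{eq:p1}: $\mathbb{E}_{\pi_b}[\hat\Delta]^\top X^* = \Delta^\top X^* = \Phi^\top L^{\pi\top}(L^{\pi\top})^{-1}\Xi\Phi = \Phi^\top\Xi\Phi = C$. On the other side, $\hat C_i=\phi_i\phi_i^\top$ is the unbiased estimate of $C$ under $\xi$, so $\mathbb{E}_\xi[\hat C]=\sum_s\xi(s)\phi(s)\phi(s)^\top=\Phi^\top\Xi\Phi=C$, and the two sides coincide.

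For the third identity it is enough, by~\eqref{eq:fundamental}, to show $X^{*\top}\Delta\theta^* = X^{*\top}R$ with $R=\mathbb{E}_\xi[\hat R]$ the importance-weighted, state-aggregated reward. I would start from the fixed-point characterization $v^*=\Pi_\Phi^{X^*}Tv^*$ of Lemma~\ref{lem:psibest}: expanding $Tv^*=R+\gamma P^\pi\Phi\theta^*$ and $\Pi_\Phi^{X^*}=\Phi(X^{*\top}\Phi)^{-1}X^{*\top}$, and using that $\Phi$ has full column rank and $X^{*\top}\Phi$ is invertible, the relation $\Phi\theta^*=\Phi(X^{*\top}\Phi)^{-1}X^{*\top}(R+\gamma P^\pi\Phi\theta^*)$ simplifies to $X^{*\top}\Phi\theta^*=X^{*\top}R+\gamma X^{*\top}P^\pi\Phi\theta^*$, i.e.\ $X^{*\top}(I-\gamma P^\pi)\Phi\theta^*=X^{*\top}R$, i.e.\ $X^{*\top}\Delta\theta^*=X^{*\top}R$; substituting $\Delta=\mathbb{E}_{\pi_b}[\hat\Delta]$ and $R=\mathbb{E}_\xi[\hat R]$ gives~\eqref{eq:theta}. (A fixed-point-free variant is also available: by~\eqref{eq:p1} and symmetry of $C$ one has $X^{*\top}\Delta=C$, so $X^{*\top}\Delta\theta^*=C\theta^*=\Phi^\top\Xi V$ since $\theta^*=C^{-1}\Phi^\top\Xi V$; and $X^{*\top}R=\Phi^\top\Xi(L^\pi)^{-1}R=\Phi^\top\Xi V$ because the Bellman equation~\eqref{eq:BellmanEq} gives $(L^\pi)^{-1}R=V$, yielding the same conclusion.)

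The main obstacle is Step 1: one must pin down precisely what ``the state-aggregated estimate $\hat\Delta$'' is and verify that aggregation together with the importance weights $\rho_i$ makes the per-state conditional expectation collapse onto the corresponding row of $L^\pi\Phi$ — in particular that $\mathbb{E}_{\pi_b}[\rho_i\mid s_i=s]=1$ eliminates $\pi_b$ and that $\mathbb{E}_{\pi_b}[\rho_i\phi'_i\mid s_i=s]$ reproduces the $s$-th row of $P^\pi\Phi$. Everything after~\eqref{eq:fundamental} is a direct consequence of~\eqref{eq:p1}, Lemma~\ref{lem:psibest}, and the Bellman equation, modulo the standing assumptions that $\Phi$ has full column rank and $X^{*\top}\Phi$ is invertible.
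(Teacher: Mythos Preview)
Your proposal is correct and mirrors the paper's own proof: the paper likewise obtains~\eqref{eq:fundamental} by cancelling $\pi_b$ against $\rho_i$ in the expectation, then plugs $\mathbb{E}_{\pi_b}[\hat\Delta]=L^\pi\Phi$ into $\Delta^\top X^*=C$ for the second claim, and derives the third by unrolling the fixed-point equation $v^*=\Pi_\Phi^{X^*}Tv^*$ line for line as you do. Your row-by-row conditional-expectation argument for~\eqref{eq:fundamental} is actually more careful than the paper's terse derivation (which does not explicitly separate the $\phi_i$ and $\phi'_i$ terms or sum over successor states), and the fixed-point-free variant you offer for~\eqref{eq:theta} via $X^{*\top}\Delta=C$, $\theta^*=C^{-1}\Phi^\top\Xi V$, and $(L^\pi)^{-1}R=V$ is a clean alternative not present in the paper.
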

\vskip -3cm
\begin{proof}
Eq. (\ref{eq:fundamental}) is derived as follows,
\begin{align}
\mathbb{E}_{\pi_b}[{\hat \Delta }]
&= {\mathbb{E}_{{\pi _b}}}[{\rho _i}\Delta {\phi _i}]  \nonumber\\
\nonumber
&= {\sum\limits_{{a_i}} {{\pi _b}({a_i}|{s_i})\frac{{\pi ({a_i}|{s_i})}}{{{\pi _b}({a_i}|{s_i})}}} {{(\Delta {\phi _i})}^ \top }} \\
\nonumber
&= {\sum\limits_{{a_i}} {\pi ({a_i}|{s_i})} {{(\Delta {\phi _i})}^ \top }} \\
&= L^\pi \Phi.
\end{align}
Then we have
\begin{align}
\mathbb{E}_{\pi_b}[{\hat \Delta }^\top]X^*
\nonumber
&= {({L^\pi }\Phi )^ \top }{({L^{\pi  \top}})^ {-1} }\Xi \Phi  \\
&= {\Phi ^ \top }\Xi \Phi = C.
\end{align}
Insert Eq.~\eqref{eq:fundamental}, $\mathbb{E}_{\xi}[\hat{C}] = C$, and  $\mathbb{E}_{\xi}[\hat R]=R$ into Eq.~\eqref{eq:v*}, there is
\begin{align*}
\Phi\theta^* & =\Pi_{\Phi}^{X^{*}}(R+\gamma\Phi'\theta^*)\\
 & =\Phi(X^{* \top}\Phi)^{-1}X^{* \top}(R+\gamma\Phi'\theta^*)\\
\theta^* & =(X^{* \top}\Phi)^{-1}X^{* \top}(R+\gamma\Phi'\theta^*)\\
X^{* \top}\Phi\theta^* & =X^{* \top}(R+\gamma\Phi'\theta^*)\\
X^{* \top}(\Phi-\gamma\Phi')\theta^* & =X^{* \top}R\\
X^{* \top}\Delta\theta^* & =X^{* \top}R .
\end{align*}

This completes the proof.
\end{proof}

\subsection{Two-Stage State-Aggregated Batch Algorithm}
Motivated by Proposition~\ref{lem:psibest}, the following two-stage near-optimal off-policy TD algorithm is proposed, where the first step is 
\begin{align}
{\hat X} = \arg \mathop {\min }\limits_X \frac{1}{2}||{{\hat \Delta}^ \top }X - {\hat C}||_F^2.
\label{eq:hatx}
\end{align}
This problem is a well-defined convex problem, and there exists a unique solution.
When $(\hat{\Delta} \hat{\Delta}^\top)$ is nonsingular, the closed-form least-squares solution is computed as 
\begin{align}
\hat X  = {({\hat \Delta} {{\hat \Delta} ^\top})^ {-1} }({\hat \Delta} C).
\end{align}
On the other hand, if $({\hat \Delta} {\hat \Delta}^\top)$ is singular, which is more general, Eq.~\eqref{eq:hatx} can be solved via gradient descent method and can be further accelerated by Nesterov's accelerated gradient method \citep{nesterov2004introductory}.
The second step is to compute $\hat{\theta}$, i.e., 
\begin{align}
\hat \theta  = \arg \mathop { \min }\limits_\theta  ||{{\hat X}^ \top }(\hat \Delta\theta  - \hat{R})||_\xi^2.
\label{eq:hattheta}
\end{align}
This is a well-defined convex problem, and the solution is unique and can be easily solved via gradient descent method. When $({\hat X^ \top }\hat \Delta )$ is nonsingular, $\hat{X}$ can be simply solved via the one-shot least-squares solution 
\begin{equation}
\hat \theta  = {({{\hat X}^ \top }\hat \Delta)^{-1}}{{\hat X}^ \top }R.
\end{equation}
Based on these, we propose the \textit{State-aggregated Optimal TD Algorithm} (\textbf{SOTD}) as follows.

\begin{algorithm}
\label{alg:batch}
\caption{State-aggregated Optimal TD Algorithm (SOTD)}
\begin{algorithmic}[1]
\STATE INPUT: Sample set $\{ {\phi _i},{r_i},{\phi _i}^\prime \} _{i = 1}^n$
\STATE Compute $\hat{\Delta}, \hat{C}$, $\hat{R}$.
\STATE Compute $\hat X$ as in Eq.~\eqref{eq:hatx}.
\STATE Compute $\hat \theta$ as in Eq.~\eqref{eq:hattheta}.
\end{algorithmic}
\end{algorithm}

\subsection{State-Dependent Optimal Off-Policy TD learning}
Algorithm 1 can find the near-optimal projection matrix,
however, there is an apparent drawback of computing $\hat{X}$ in this way because of computational complexity.
Note that $\hat{X}$ is a $|\mathcal{S}| \times d$ matrix, which is computationally costly in large-scale reinforcement learning problems where the number of states $|\mathcal{S}|$ is large, or in continuous state space. 
To this end, the following algorithm is designed to tackle difficulties mentioned above.

In real applications where $d \ll |\mathcal{S}|$ or the state space is continuous, the proposed algorithm would not work well in practice since it has to compute a $|\mathcal{S}| \times d$ matrix $\hat{X}$. A desirable way out is to approximate the $(L^{\pi \top})^{-1}$ with a diagonal matrix $\Omega$, such that each row of $\Omega$ does not depend on other states, but only on its corresponding state. With such an assumption, $\hat{X}$ can be represented via a product of matrices as follows,
\begin{align}
\hat X = \Omega \Xi \Phi, 
\label{eq:hatxdiangonal}
\end{align}
where $\Omega$ is a $|\mathcal{S}| \times |\mathcal{S}|$ diagonal matrix. The $i$-th diagonal entry of $\Omega$ is denoted as $\omega_i$, i.e., $\Omega _{ii} := \omega_i$.
We term $\Omega$ as ``state-dependent'' diagonal matrix. 
Then the optimization problem reduces to
\begin{align}
\hat \Omega  = \arg \mathop {\min }\limits_\Omega  \frac{1}{2}||{\hat \Delta ^ \top }\Omega \Xi \Phi  - \hat C||_F^2,\quad{\rm{s.t.}}\quad{\Omega _{ij}} = 0,\quad i \ne j.
\label{eq:batch}
\end{align}


It is easy to prove the following
\begin{align}
{\hat \Delta ^ \top }\Omega \Xi \Phi {\rm{ }} = \mathbb{E}[{\rho _i}{\omega _i}{\phi _i}\Delta {\phi _i}^ \top ].
\end{align}
\note[lb]{Dong,I am per se, doing the loosest relaxation on norm minimization, aka, $||a+b|| < ||a|| +||b||$, but I want to give it some NICE justification. Do you have any ideas? Right now the justification is really too weak.}
\noindent
Based on the assumption that $\omega_{i}$ should be only (current) state-dependent, we have the following relaxed objective function, i.e., for the $i$-th sample, 
\begin{align}
\forall i,  {\omega _i} &= \arg \mathop {\min }\limits_\omega  || {\omega {\rho _i}{\phi _i}\Delta \phi _i^ \top  - {\phi _i}\phi _i^ \top }||_F^2.
\label{eq:sto2}
\end{align}
Trace norm minimization can also be used, i.e., 
\begin{equation}
{\omega _i} = \arg \mathop {\min }\limits_\omega  ||{\phi _i}{(\omega {\rho _i}\Delta {\phi _i} - {\phi _i})^ \top }||_*.
\end{equation}

Two issues arise here:
\begin{itemize}
\item \textit{Computational cost}.
Trace norm minimization is usually more computationally expensive since it involves the singular value decomposition (SVD) operation.
\item \textit{Choice of the norm}. The issue here is to select the best norm as the objective function. Although there is already several pieces of literature discussing this problem, however, it remains unclear that at first glance, which norm minimization would achieve the best result in our problem.
\end{itemize}

%

We will resolve these two concerns by scrutinizing the structure of the problem. Notice that Eq.~\eqref{eq:sto2} can be written as ${\omega _i}= \arg \mathop {\min }\limits_\omega  ||{\phi _i}{(\omega {\rho _i}\Delta {\phi _i} - {\phi _i})^ \top }||_F^2$. 
Since ${\phi _i}{(\omega {\rho _i}\Delta {\phi _i} - {\phi _i})^ \top }$ is a rank-$1$ matrix, the solution is identical w.r.t Frobenius norm and trace norm, and the closed-form solution is
\begin{align}
{\omega _i} = \frac{{\Delta \phi _i^ \top {\phi _i}}}{{{\rho _i}\Delta \phi _i^ \top \Delta {\phi _i}}}.
\label{eq:sto4}
\end{align}
Interested readers will find a detailed deduction in the Appendix.
The update law is thus as follows,
\begin{align}
{\theta _{i + 1}} = {\theta _{i}} + \alpha_i \rho _i{\omega _i}{\delta _i}{\phi _i}.
\label{eq:theta-diag}
\end{align}
%
Samples with zero importance ratio (i.e., $\rho_i=0$) are discarded.
Now it is ready to formulate the \textit{Optimal Off-Policy TD Algorithm} (\textbf{O$^2$TD}) algorithm as in Algorithm 2. 
It is easy to verify that the computational cost per step is $O(d)$, as can be seen from the computation of Eq.~\eqref{eq:sto4} and \eqref{eq:theta-diag}.

\begin{algorithm}
\begin{algorithmic}[1]
\caption{Optimal Off-Policy TD Algorithm (O$^2$TD)}
\STATE INPUT: Sample set $\{ {\phi _i},{r_i},{\phi _i}^\prime \} _{i = 1}^n$
\FOR {$i=1,\ldots,n$}
\STATE Compute $\phi_i, \Delta \phi_i$, $\delta_i=r_i+\gamma\phi_i^{'\top}\theta_i-\phi_i^\top\theta_i$.
\STATE Compute $\omega_i$ according to Eq.~\eqref{eq:sto4}.
\STATE Compute $\theta_{i+1}$ according to Eq.~\eqref{eq:theta-diag}.
\ENDFOR
\label{alg:sto}
\end{algorithmic}
\end{algorithm}




\note[lb]{Dong,count on you!}

\section{Related Work}
\label{sec:related}

One of the related work to optimal temporal difference learning is the emphatic temporal difference learning (ETD) work by \cite{etd:sutton2015}. That work was motivated by the off-policy convergence issue, and we will shed new light on the algorithms from the optimality perspective.
Similar to O$^2$TD, ETD also assumes that the optimal projection $X^*$  can be approximated by the product of diagonal matrices $\Omega, \Xi$ and the $\Phi$ matrices, i.e., the near-optimal projection matrix is formulated as in Eq.~\eqref{eq:hatxdiangonal}. Then a different technique is used based on the power series expansion, i.e., 
\begin{equation}
{({L^\pi })^{ - 1}} = {(I - {\gamma P^\pi })^{ - 1}} = \sum\limits_{i = 0}^\infty  {{{({\gamma P^\pi })}^i}}. 
\end{equation}
Then the power series expansion is used to compute $\Omega\Xi$ as a whole.  
Since the optimal oblique projection matrix is ${X^*} = {({L^{\pi \top} })^{ - 1}}\Xi \Phi $, it is evident that $\hat X = \Omega \Xi \Phi$ should be as close as possible to $X^*$, especially the diagonal elements. The diagonal elements of $\hat{X}$ are represented as a (column) vector $f$.
One conjecture is that for the diagonal matrix of $\hat{X}$, it is desired that
$
f = {({L^{\pi  \top }})^{ - 1}}{\xi }
$.
By using the power series expansion, $f$ can be expanded as
\begin{align}
f &= {({L^{\pi  \top }})^{ - 1}}{\xi } = (\sum\limits_{i = 0}^\infty  {{{(\gamma {P^{\pi \top}})}^i}} ){\xi }\\
 &= (I + \gamma {P^{\pi \top}} + {(\gamma {P^{\pi \top}})^2} +  \cdots + {(\gamma {P^{\pi \top}})^k} + \cdots){\xi }.
\end{align}
Readers familiar with the emphatic TD learning algorithm know that this is actually identical to Equation (13) in the paper by \cite{etd:sutton2015}, where a scalar follow-on trace is computed as
\footnote{We use subscript $\bullet_t$ to denote sequential samples, and subscription $\bullet_i$ to denote samples that are randomly sampled with replacement.}
\begin{align}
{F_0} = 1; \quad
{F_t} = I + \gamma {\rho _{t - 1}}{F_{t - 1}}, \quad t>0,
\label{eq:followon}
\end{align}
and it turns out that 
\begin{align}
{f_i} = {\xi }(i)\mathop {\lim }\limits_{t \to \infty } \mathbb{E}[{F_t}|{S_t} = {s_i}],
\end{align}
which will lead to the standard emphatic TD($0$) algorithm,
\begin{align}
\theta_{t+1} = \theta_t + \alpha_t F_t \rho_t \delta_t \phi_t.
\label{eq:theta_etd}
\end{align}

Due to space limitations, we refer interested readers to \citep{etd:sutton2015} for more details of the algorithm, and \citep{etd:hallak2015generalized, etd:yu2015weak} for more theoretical analysis.
It should also be noted that although this section does not provide any further extension of the ETD algorithm regarding algorithm design and analysis, to the best of our knowledge, it is the first time associating the ETD algorithm with near optimal temporal difference learning. This sheds a helpful light in understanding the family of the emphatic TD learning algorithms and the design of the follow-on trace. However, the ETD algorithm requires sequential sampling condition, i.e., $s'_t = s_{t+1}, \forall t>0$, which is not suitable for a set of samples collected from many episodes.


\section{Experimental Study}
\label{sec:experimental}
This section evaluates the effectiveness of the proposed algorithms. The effectiveness of SOTD algorithm is illustrated via comparison to LSTD, which is also a batch TD algorithm. A comparison study of O$^2$TD is conducted with GTD2 and ETD as three off-policy convergent TD algorithms with linear computational cost per step. 

\subsection{Experimental Study of SOTD}
\label{sec:exp-sotd}
The effectiveness of the proposed SOTD algorithm is shown by comparing the performance on the $400$-state Random MDP domain \citep{dann2014tdsurvey} with LSTD \citep{bradtke:1996linear,boyan:icml99} algorithm, which is one of the most sample-efficient algorithms to the best of our knowledge. 
Two widely used measurements in TD learning, Mean-Squares Projected Bellman Error (\textbf{MSPBE}) \citep{tdc:2009,dann2014tdsurvey} and Mean-Squares Error (\textbf{MSE}) are used as the error measurements.

This domain is a randomly generated MDP with $400$ states and $10$ actions \citep{dann2014tdsurvey}. The transition probabilities are defined as $P(s'|s,a) \propto p_{ss'}^a + {10^{ - 5}}$, where 
$p_{ss'}^a \sim U[0,1]$. 
The behavior policy $\pi_b$, the target policy $\pi$ as well as the start distribution are sampled in a similar manner. 
Each state is represented by a $201$-dimensional feature vector, where $200$ of the features were sampled from a uniform distribution, and the last feature was a constant one, the discount factor is set to $\gamma = 0.95$.
The number of features $d=200$, and we compare the performance of LSTD and SOTD with different numbers of training samples $n$, as shown in Figure~\ref{fig:sotd}. As Figure~\ref{fig:sotd} shows, with relatively small sample size $n$, SOTD tends to be even more sample-efficient than the LSTD algorithm.
\begin{figure}[h]
\centering
\begin{minipage}{1\textwidth}
\includegraphics[width=.5\textwidth,height=1.75in]{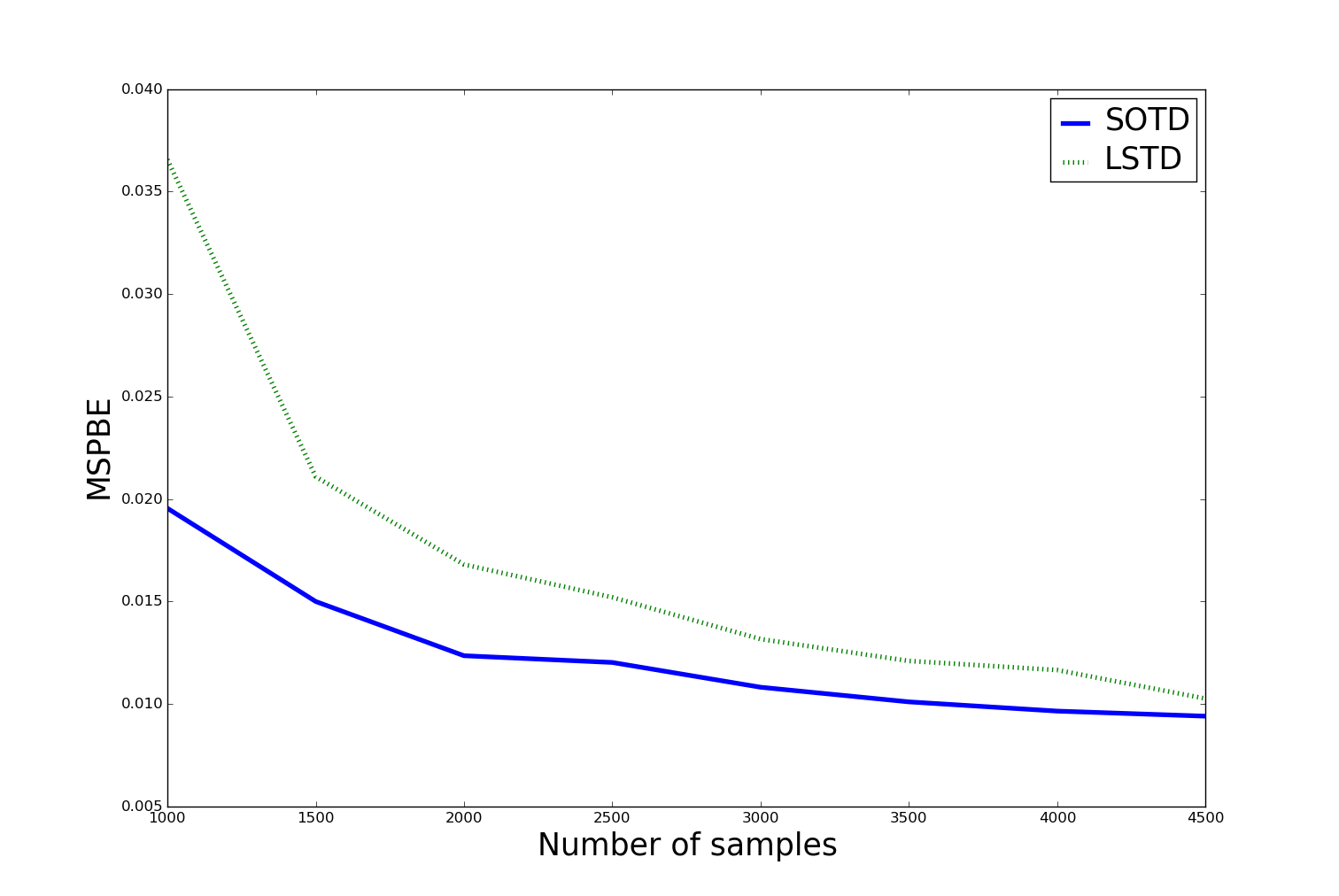}\\
\includegraphics[width=.5\textwidth,height=1.75in]{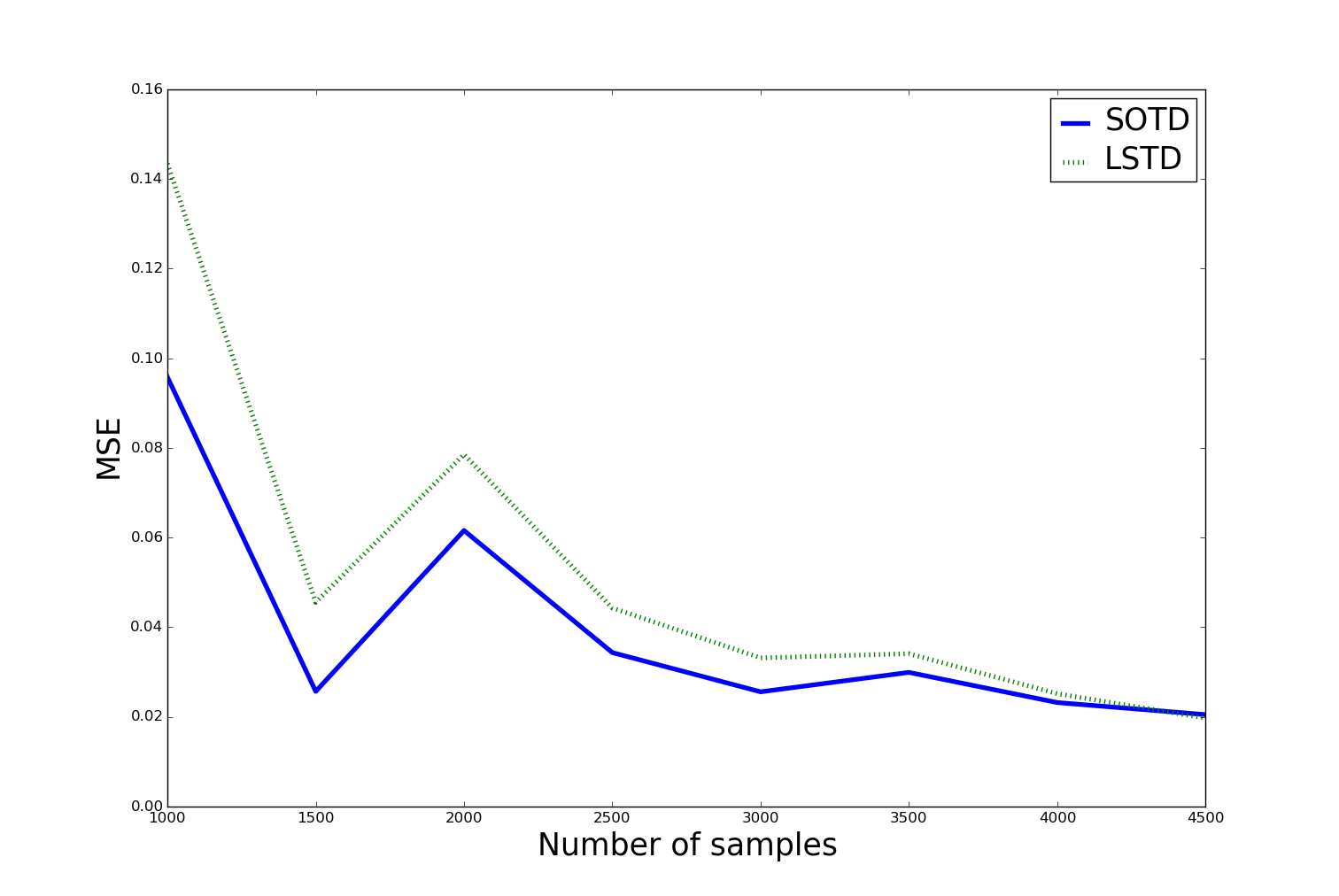}
\end{minipage}
\caption{Comparison between SOTD and LSTD on $400$-State Random MDP Domain}
\label{fig:sotd}
\end{figure}
%

\subsection{Experimental Study of O$^2$TD}
\label{sec:exp-o2td}
This section compares the previous GTD2, ETD method with the O$^2$TD method using various domains with regard to their value function approximation performances. It should be mentioned that since the major focus of this paper is value function approximation and thus comparisons on control learning performance are not reported in this paper. We use $\alpha_{\rm{E}}$, $\alpha_{\rm{O}}$, and $\alpha_{\rm{G}}$ to denote the stepsizes for ETD, O$^2$TD, and GTD2, respectively.
Root Mean-Squares Projected Bellman Error (\textbf{RMSPBE}) and Root Mean-Squares Error (\textbf{RMSE}) are used for better visualization.




\subsubsection{Baird Domain}
The Baird example \citep{Baird:ResidualAlgorithms1995} is a well-known example to test the performance of off-policy convergent algorithms. 
Constant stepsize $\alpha_{\rm{O}} = 0.006$, $\alpha_{\rm{G}} = 0.005$, which are chosen via comparison studies as in \citep{dann2014tdsurvey}.
The Monte-Carlo estimation of true value function $V$ is conducted as in \citep{dann2014tdsurvey}.
Figure~\ref{fig:baird} shows the RMSPBE curve and RMSE curve of GTD2, O$^2$TD of $5000$ steps averaged over $20$ runs. 
As can be seen from Figure~\ref{fig:baird}, although the variance of O$^2$TD is larger than GTD2's, O$^2$TD has a significant improvement over the GTD2 algorithm wherein the RMSPBE, the RMSE and the variance are all substantially reduced.
The low variance of the GTD2 learning curve can be explained by the advantage of stochastic gradient against stochastic approximation method, as explained in \cite{liu2015uai}.
\begin{figure}[htbp]
\centering
\begin{minipage}{1\textwidth}
\includegraphics[width=.5\textwidth,height=1.75in]{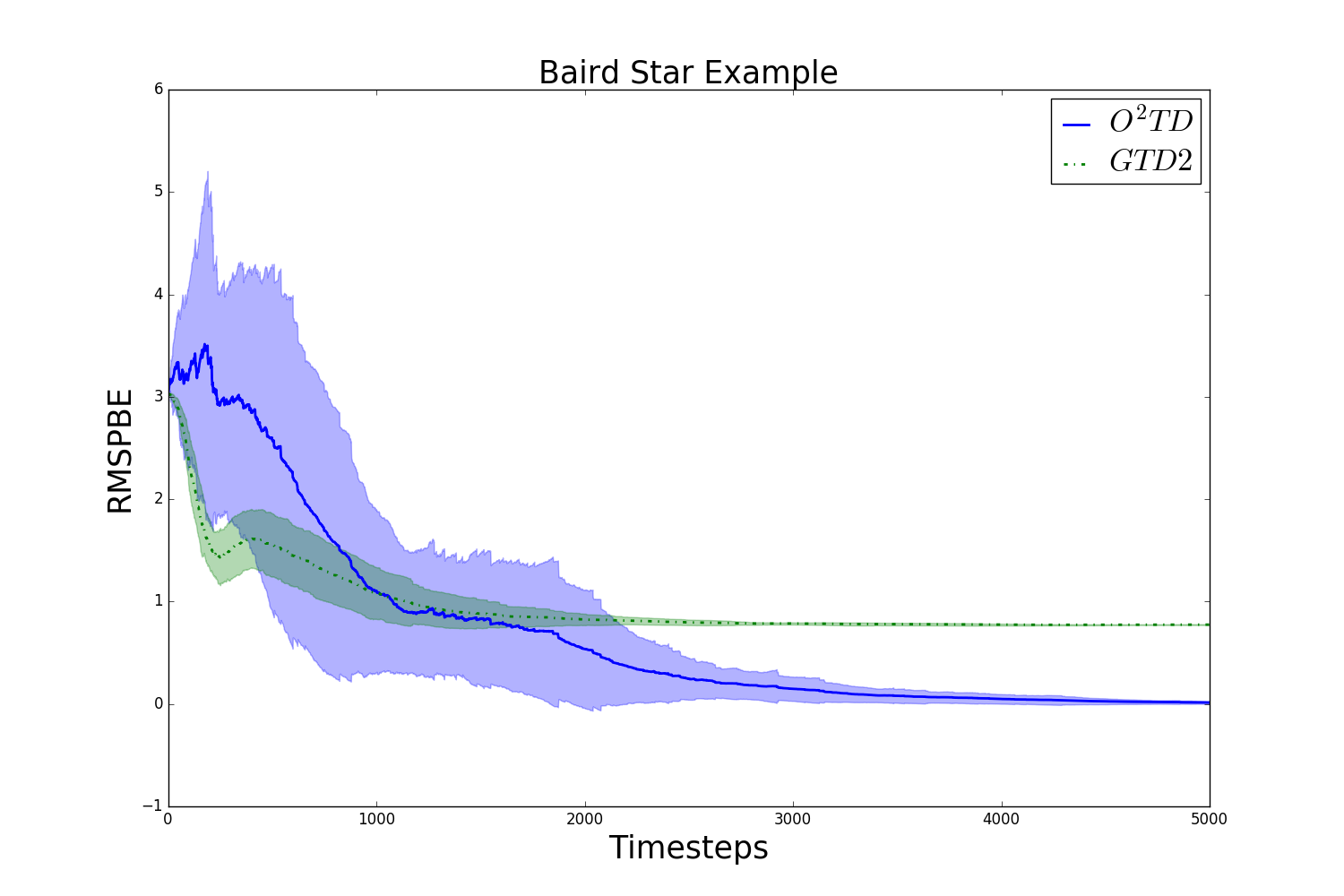}\\
\includegraphics[width=.5\textwidth,height=1.75in]{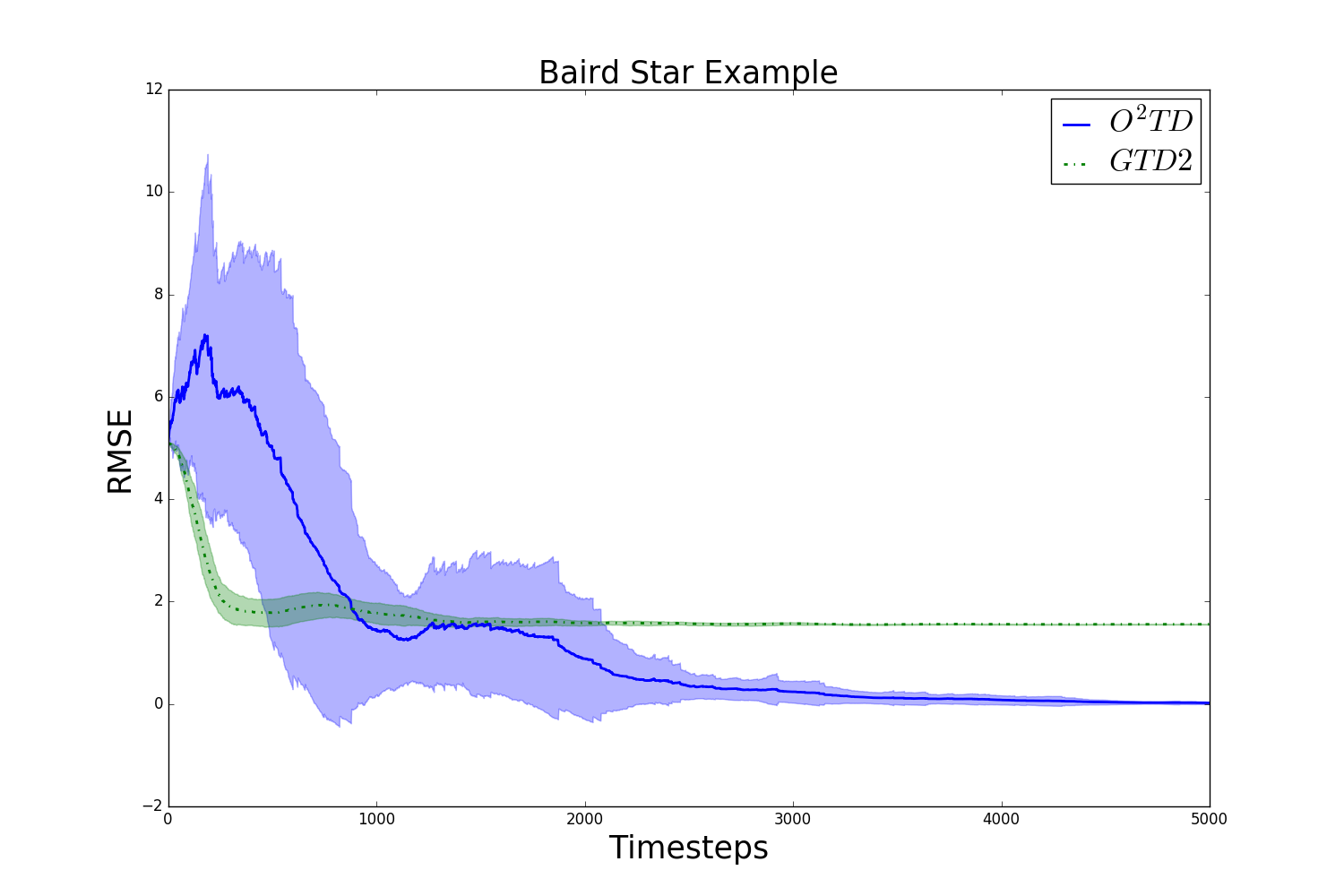}
\end{minipage}
\caption{Baird Domain}
\label{fig:baird}
\end{figure}

\note[lb]{Daoming, please add result description and analysis.}

\subsubsection{$400$-State Random MDP}
The randomly generated MDP with $400$ states and $10$ actions used in Section~\ref{sec:exp-sotd} is adopted as the second task. 
For sequential sampling (Figure~\ref{fig:ran-seq}), constant stepsize $\alpha_{\rm{E}} = 3*10^{-6}$, $\alpha_{\rm{O}} = 0.0007$, $\alpha_{\rm{G}} = 0.002$.
For random sampling (Figure~\ref{fig:ran-rd}), constant stepsize $\alpha_{\rm{E}} = 2*10^{-6}$, $\alpha_{\rm{O}} = 0.0006$, $\alpha_{\rm{G}} = 0.0009$. 
The Monte-Carlo estimation of true value function $V$ is conducted as in \citep{dann2014tdsurvey}. 
ETD tends to diverge easily with large stepsizes on this domain, so $\alpha_{\rm{E}}$ is set to be very small. As Figure~\ref{fig:ran-seq} and Figure~\ref{fig:ran-rd} show, O$^2$TD performs overall the best on this domain, although the variance is relatively larger than GTD2's.
\begin{figure}[htbp]
\centering
\begin{minipage}{1\textwidth}
\includegraphics[width=.5\textwidth,height=1.75in]{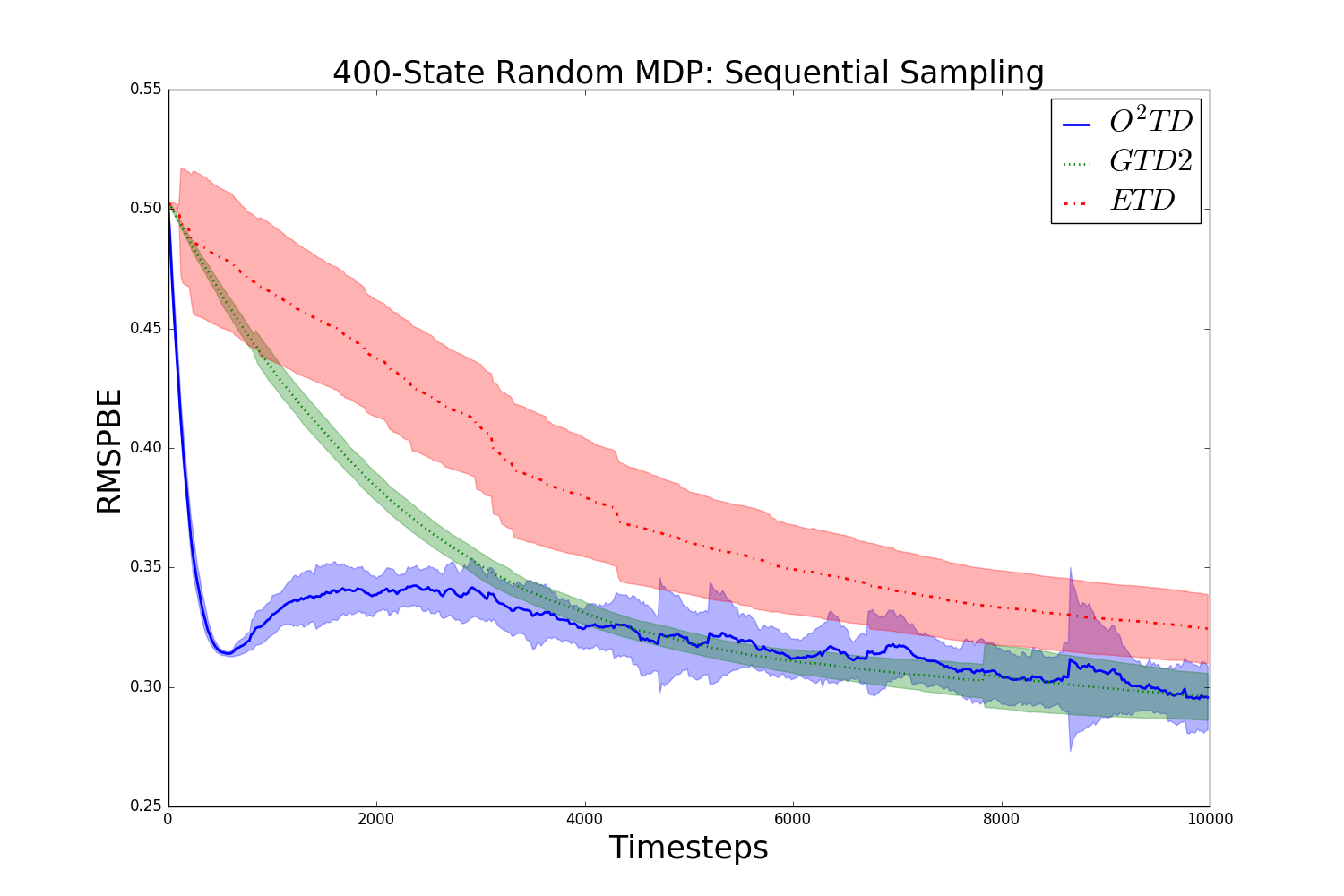}\\
\includegraphics[width=.5\textwidth,height=1.75in]{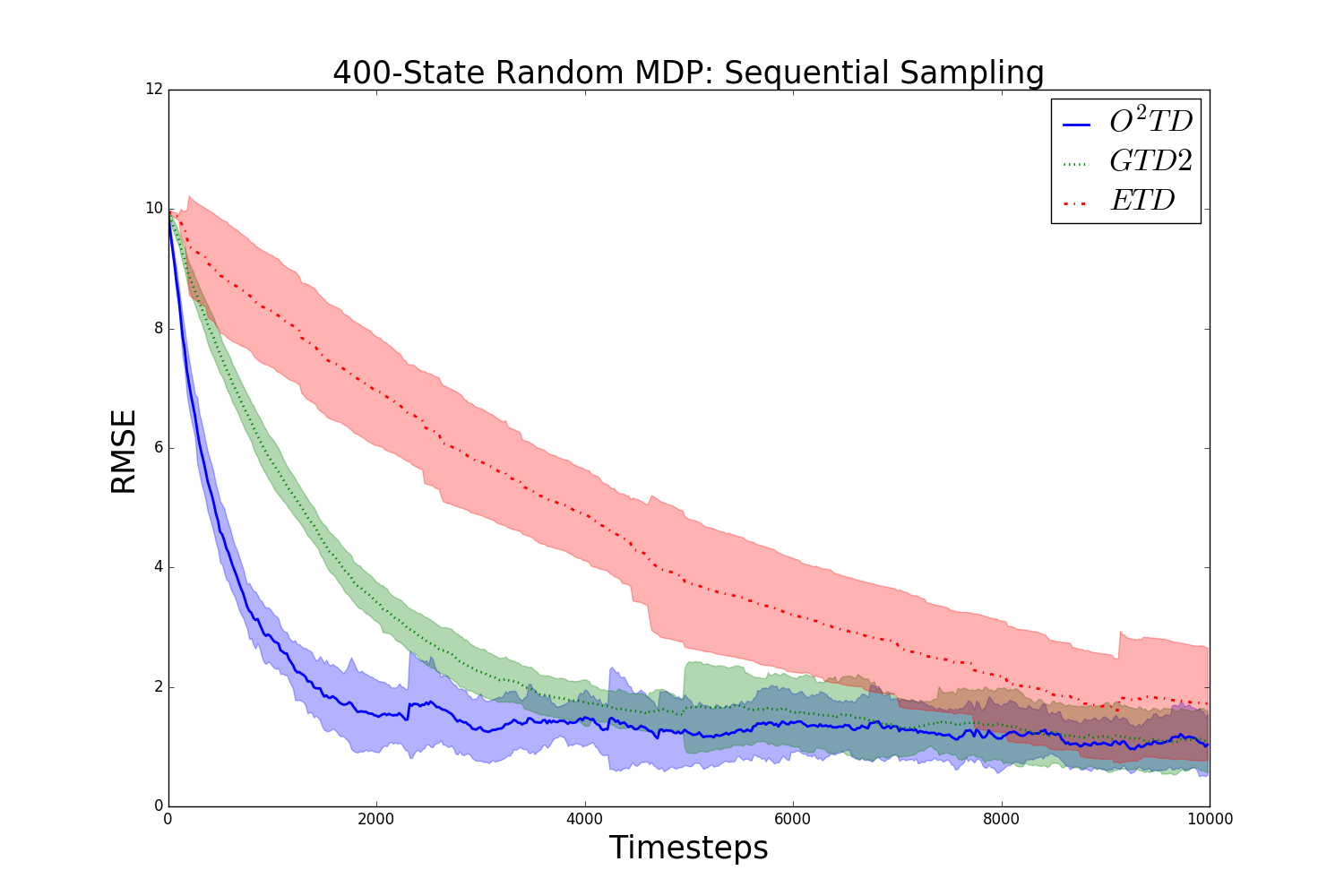}\\
\end{minipage}
\caption{Random MDP with Sequential Sampling}
\label{fig:ran-seq}
\end{figure}
\begin{figure}[htbp]
\centering
\begin{minipage}{1\textwidth}
\includegraphics[width=.5\textwidth,height=1.75in]{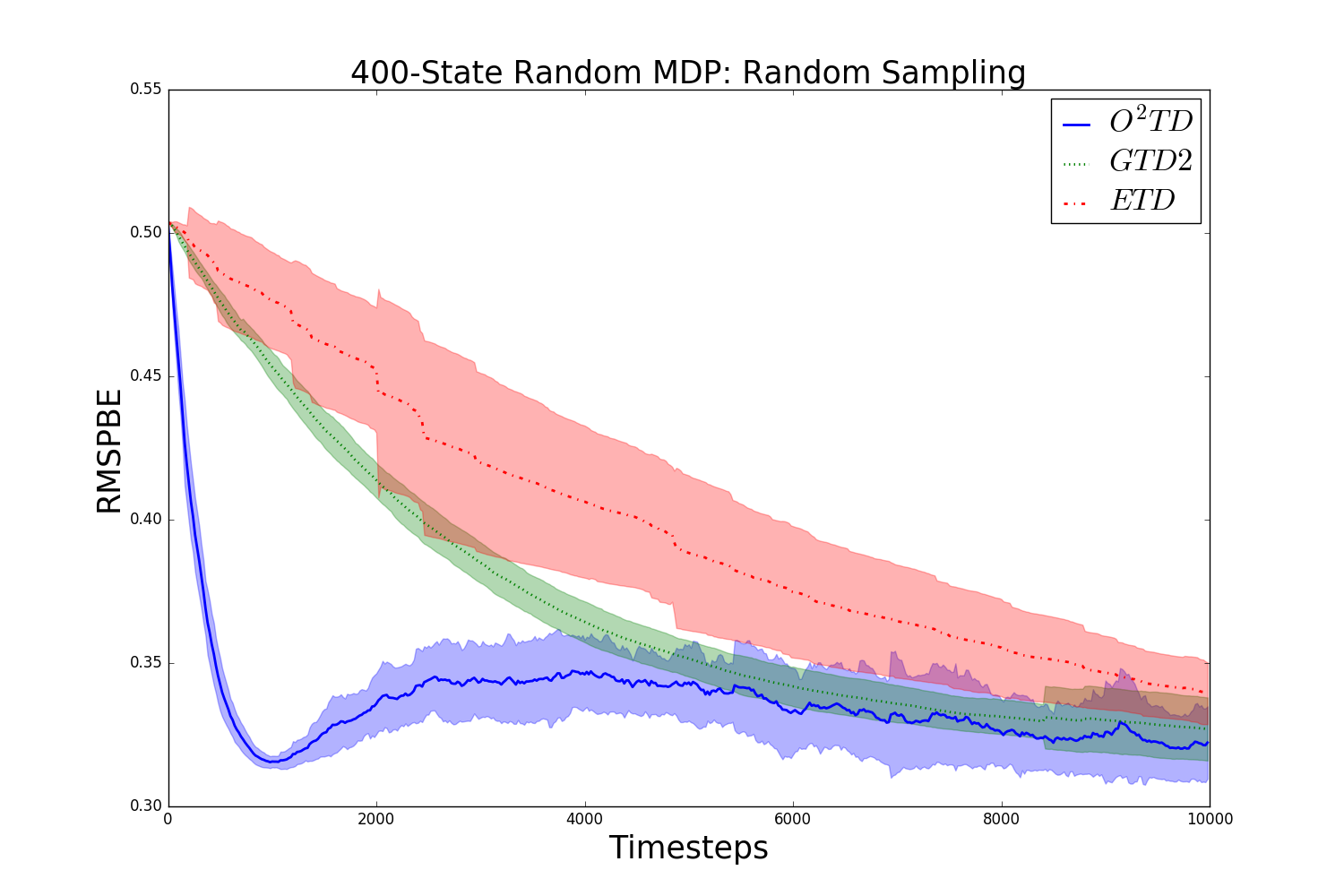}\\
\includegraphics[width=.5\textwidth,height=1.75in]{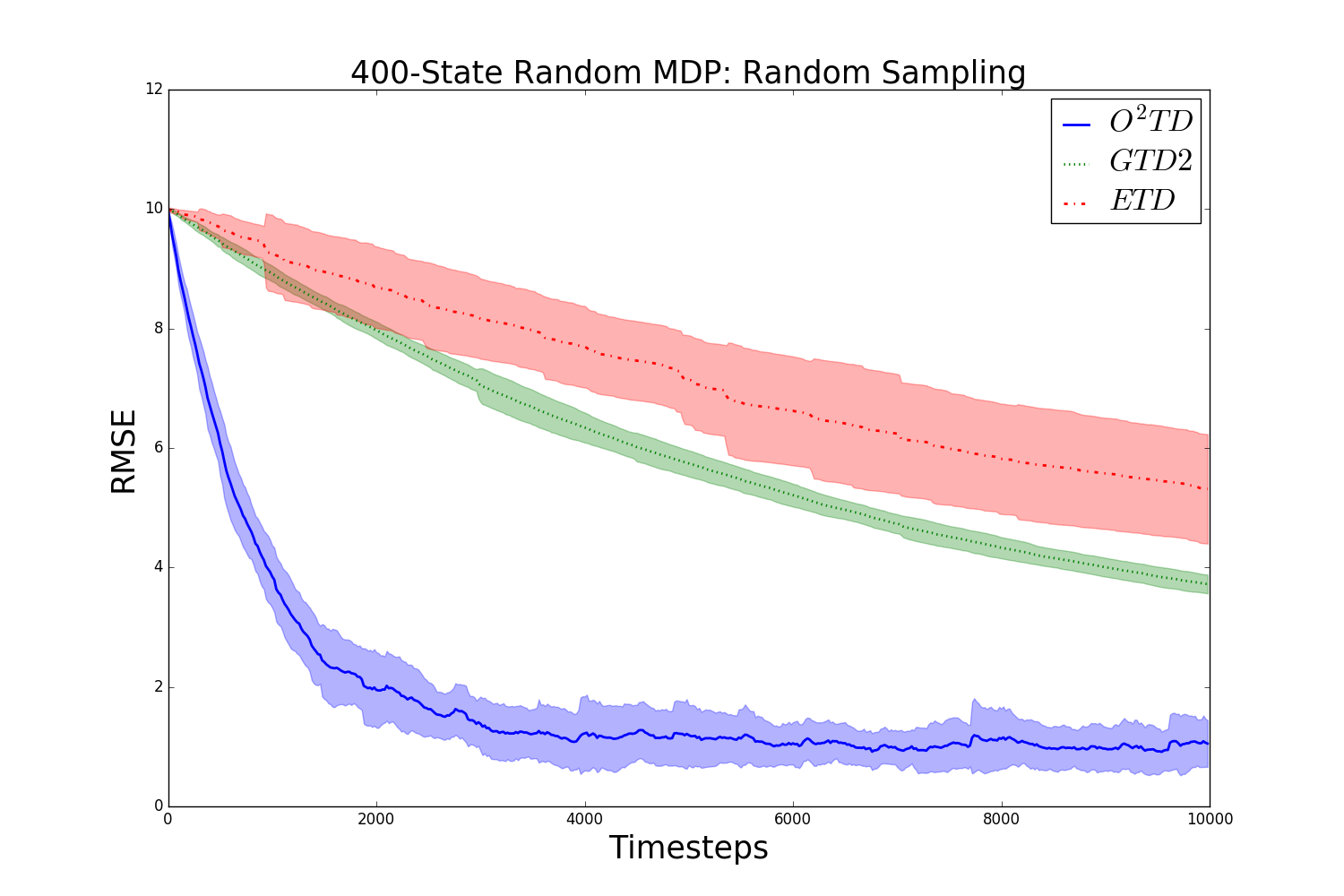}
\end{minipage}
\caption{Random MDP with Random Sampling}
\label{fig:ran-rd}
\end{figure}

\subsubsection{Mountain Car}
This section uses the mountain car example to evaluate the validity of the proposal algorithm. The mountain car MDP is an optimal control problem with a continuous two-dimensional state space. The steep discontinuity in the value function makes learning difficult. The Fourier basis \citep{konidaris:fourier} is used, which is a kind of fixed basis set.
An empirically good policy $\pi$ was obtained first, then we ran this policy $\pi$ to collect trajectories that comprise the dataset. On-policy policy evaluation of $\pi$ is then conducted using the collected samples.
For sequential sampling, constant stepsize $\alpha_{\rm{E}} = 0.001$, $\alpha_{\rm{O}} = 0.1$, $\alpha_{\rm{G}} = 0.2$.
For random sampling, constant stepsize $\alpha_{\rm{E}} = 0.0002$, $\alpha_{\rm{O}} = 0.05$, $\alpha_{\rm{G}} = 0.06$. 
The Monte-Carlo estimation of $V$ is estimated via $100$ runs.
As Figure~\ref{fig:mcar-seq} and Figure~\ref{fig:mcar-rd} show, GTD2 appears to perform the worst on this domain, and O$^2$TD tends to converge faster than ETD.
\begin{figure}[htbp]
\centering
\begin{minipage}{1\textwidth}
\includegraphics[width=.5\textwidth,height=1.75in]{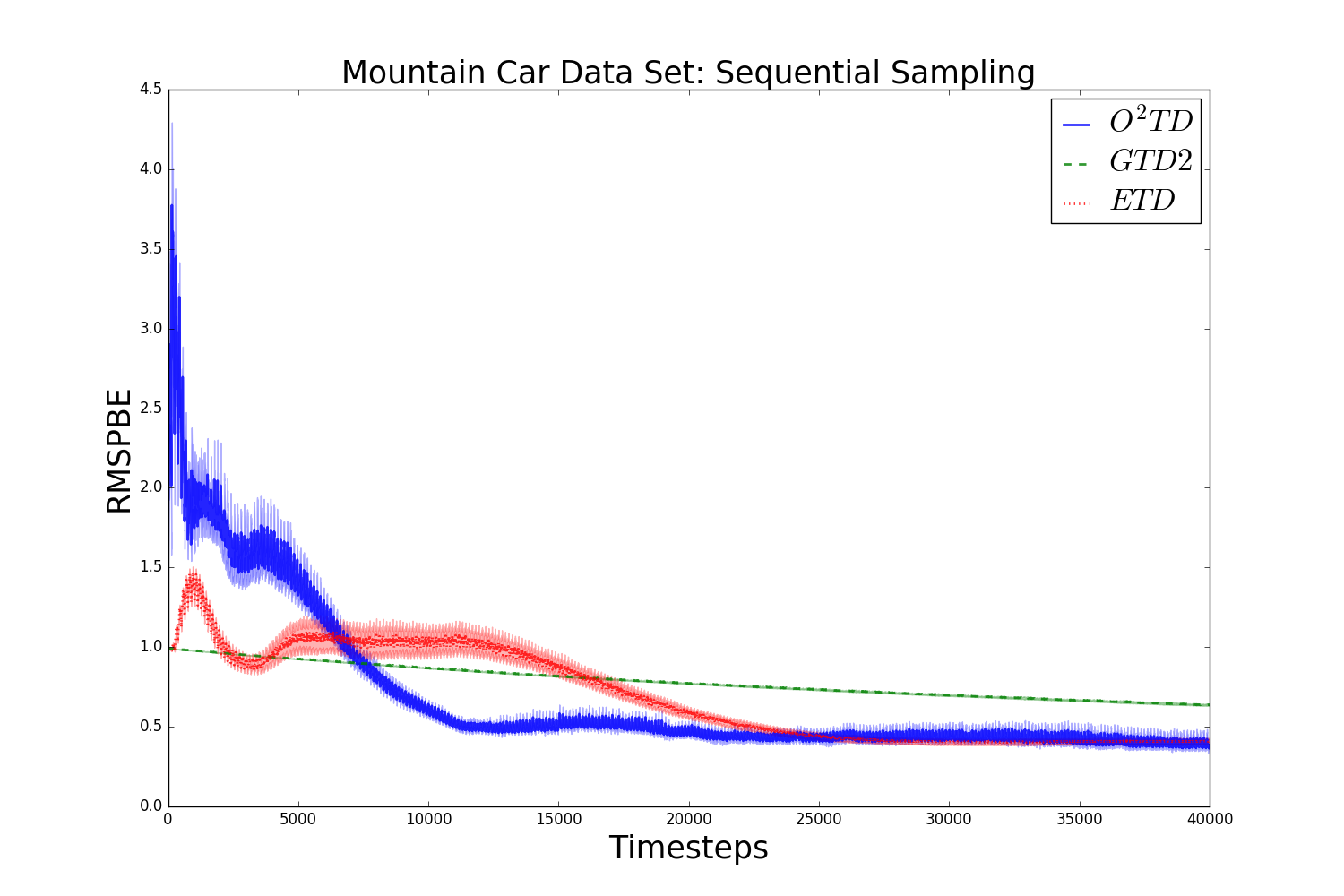}\\
\includegraphics[width=.5\textwidth,height=1.75in]{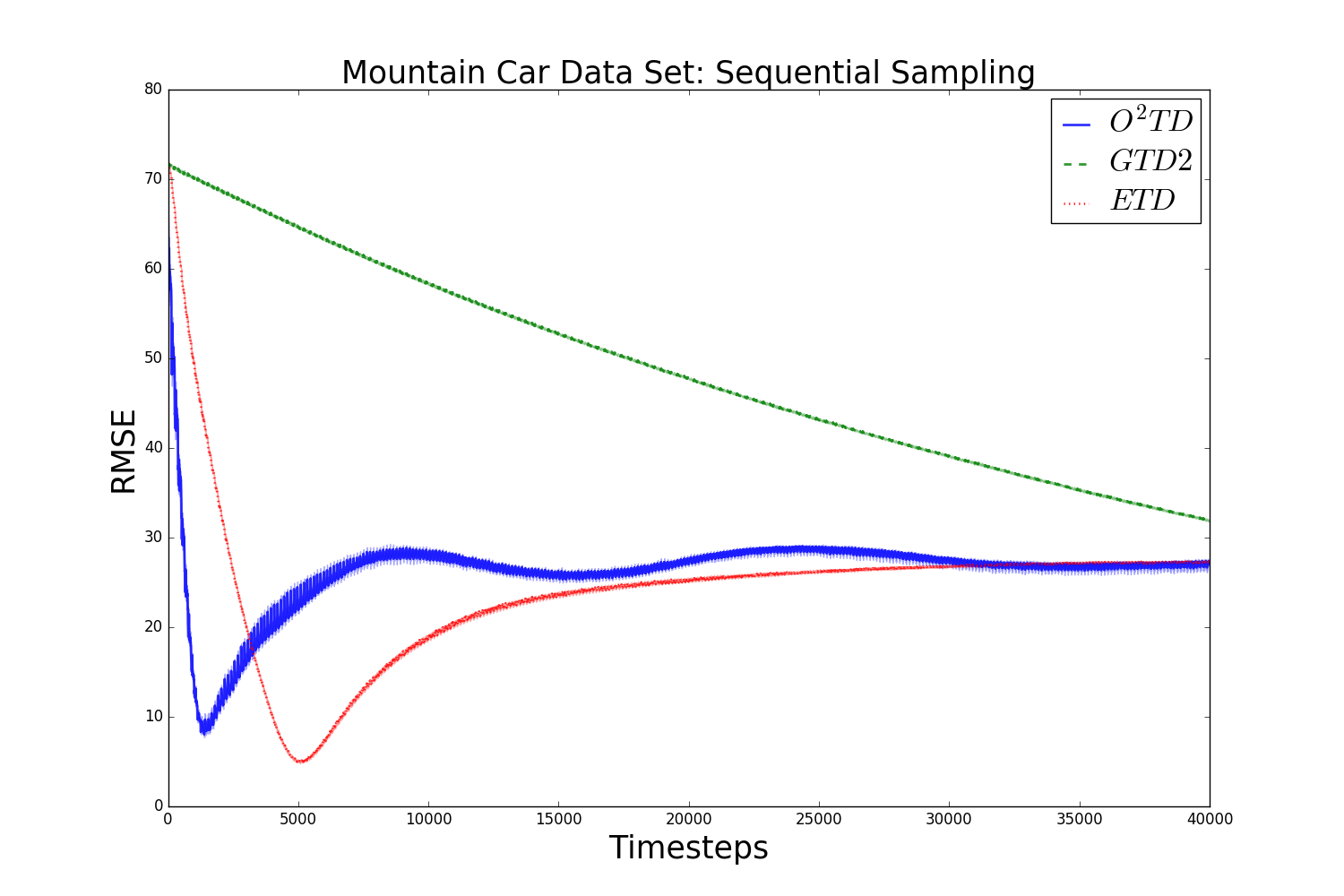}\\
\end{minipage}
\caption{Mountain car with Sequential Sampling}
\label{fig:mcar-seq}
\end{figure}
\begin{figure}[htbp]
\centering
\begin{minipage}{1\textwidth}
\includegraphics[width=.5\textwidth,height=1.75in]{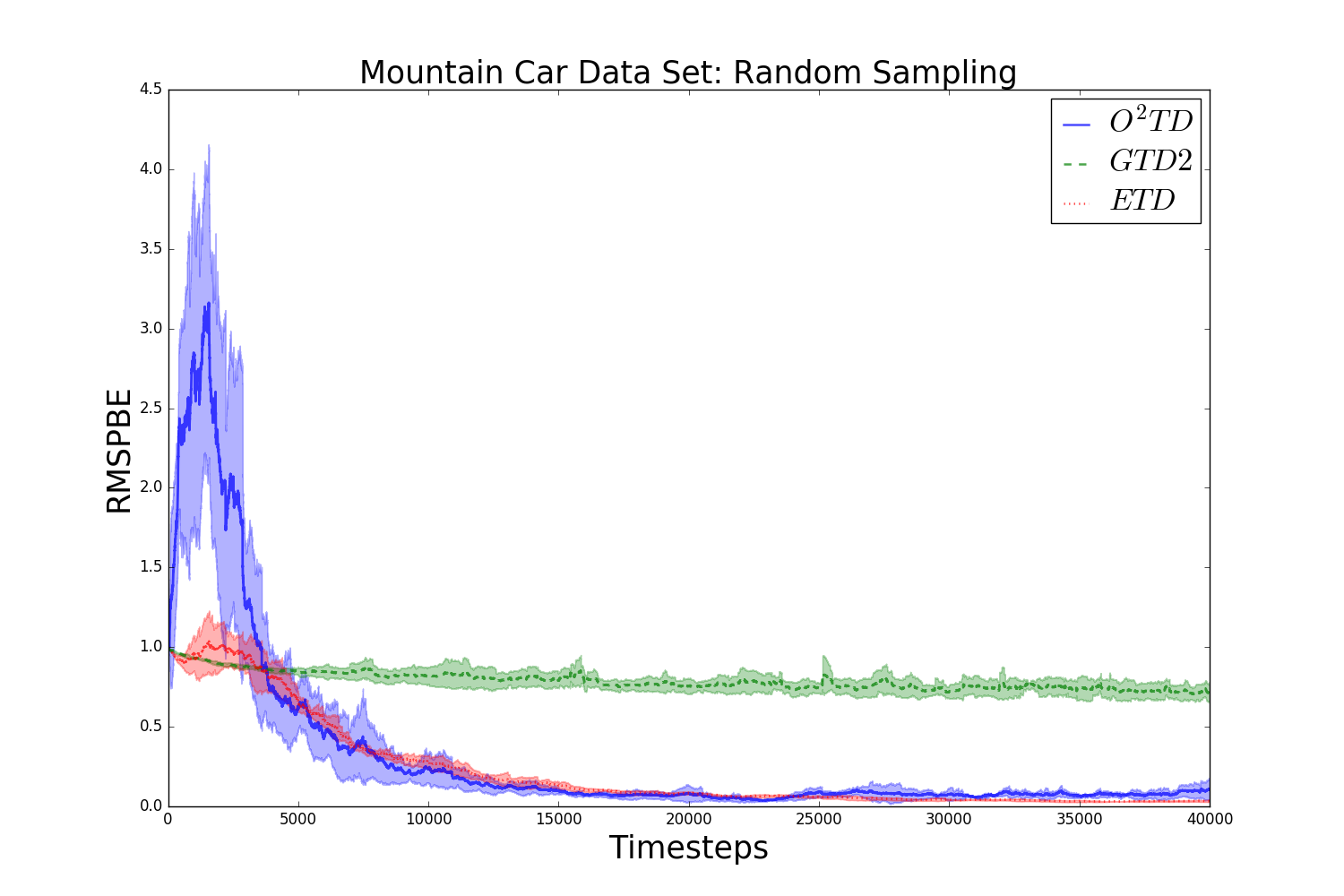}\\
\includegraphics[width=.5\textwidth,height=1.75in]{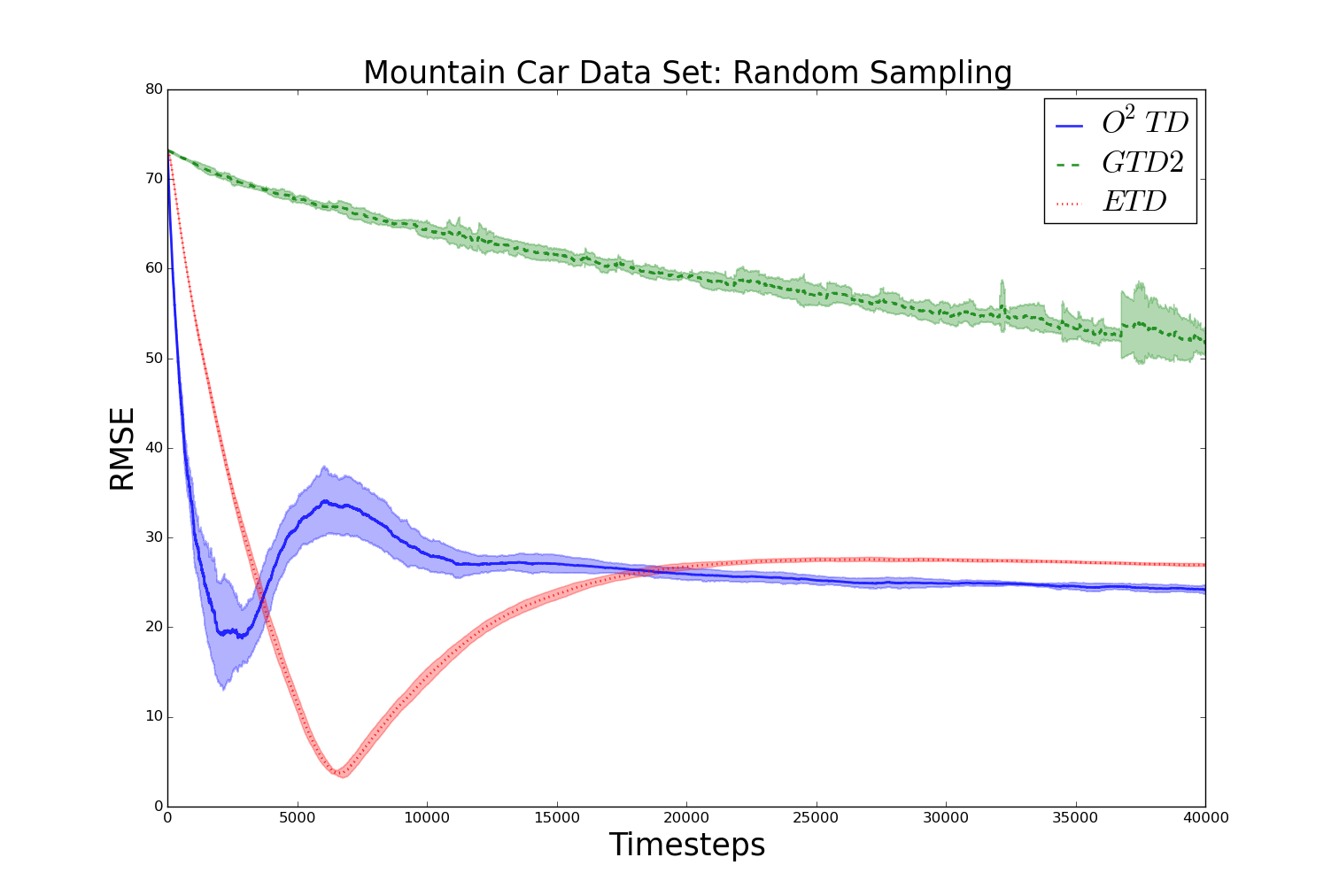}
\end{minipage}
\caption{Mountain car with Random Sampling}
\label{fig:mcar-rd}
\end{figure}




\section{Conclusion}

This paper proposes an interesting question: 
\begin{itemize}
\item How to improve the approximation quality of the true value function $V$?
\end{itemize}
To this end, several algorithms are proposed that can apply to different scenarios. Empirical experimental studies solidify the effectiveness of the proposed algorithm with different learning settings.

The major contribution is \textit{not} to propose another new TD algorithm with linear computational complexity per step, but to make an attempt to explore the optimal prediction of the value function in model-free policy evaluation. 
There are numerous promising future work potentials along this direction of research. One possible future research is to explore the relation between the near optimal projection matrix with eligibility traces and if the combination can improve the value function prediction performance in integration. Another interesting direction is that the current computationally tractable criteria of computing $X^*$ are based on Proposition~\ref{pro:fundamental} and the power series expansion of $(L^\pi)^{-1}$, it would be very intriguing to explore if there exist other computationally tractable criteria.

\bibliographystyle{named}
\bibliography{thesisbib}

 
\section*{Appendix}

\subsection*{Details of Eq.~\eqref{eq:sto4}}
To obtain Eq.~\eqref{eq:sto4}, we first introduce the following Lemmas to compute the singular value of rank-$1$ matrices.
\begin{lemma}
A rank-$1$ real square matrix $G=pq^\top$ where $p,q$ are vectors of the same length, the eigenvalues of $G$ are 
\begin{align}
\lambda (G) = \{ {p^\top}q,0,0,0, \cdots \},
\end{align}
i.e., $G$ has only one nonzero eigenvalue ${p^\top}q$, and all other eigenvalues are $0$, and thus we also have 
\begin{align}
{\rm{Tr}}(G) = {p^T}q,
\label{eq:tr}
\end{align}
where ${\rm{Tr}}(\cdot)$ is the trace of a matrix.
\label{lem:3}
\end{lemma}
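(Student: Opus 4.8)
The plan is to exploit the rank-one structure of $G = pq^\top$ directly, separating the degenerate case $p^\top q = 0$ for a brief remark on bookkeeping. If $p = 0$ then $G = 0$ and every claim is trivial, so assume $p \neq 0$. First I would observe that $\mathrm{range}(G) \subseteq \mathrm{span}(p)$, hence $\mathrm{rank}(G) \le 1$ and $\dim \ker G \ge n-1$, where $n$ is the dimension; every nonzero vector of $\ker G$ is an eigenvector for the eigenvalue $0$, so $0$ is an eigenvalue of $G$ with multiplicity at least $n-1$. Second, a one-line computation $G p = p\,(q^\top p) = (p^\top q)\, p$ exhibits $p$ as an eigenvector with eigenvalue $p^\top q$.

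Since the characteristic polynomial $\det(\lambda I - G)$ has degree $n$, and I have exhibited $n$ roots counted with multiplicity (namely $n-1$ copies of $0$ and one copy of $p^\top q$), these are all the eigenvalues; equivalently, one can compute $\det(\lambda I - p q^\top) = \lambda^{n-1}(\lambda - p^\top q)$ directly from the matrix determinant lemma, or by conjugating $G$ to an upper-triangular matrix whose first diagonal entry is $p^\top q$ and whose remaining diagonal entries vanish. This establishes $\lambda(G) = \{p^\top q, 0, 0, \ldots\}$. For the trace identity I would simply note $\mathrm{Tr}(G) = \sum_{i} G_{ii} = \sum_i p_i q_i = p^\top q$, which is also consistent with the fact that the trace is the sum of the eigenvalues.

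The only point requiring any care — and it is the closest thing to an obstacle here — is the case $p^\top q = 0$ with $p \neq 0$: then $p \in \ker G$, the ``nonzero'' eigenvalue $p^\top q$ actually coincides with the zero eigenvalues, and $G$ is nilpotent rather than diagonalizable, so there is no eigenbasis. This is why I phrase the multiplicity count in terms of the characteristic polynomial $\det(\lambda I - G) = \lambda^{n-1}(\lambda - p^\top q)$ rather than in terms of an eigenvector decomposition; the statement of the lemma remains correct verbatim in this case. Beyond this remark I do not anticipate any substantive difficulty, as the argument is entirely elementary.
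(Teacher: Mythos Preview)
Your argument is correct and complete, including the careful handling of the degenerate case $p^\top q = 0$ via the characteristic polynomial rather than an eigenbasis. In fact the paper does not supply its own proof of this lemma at all; it is stated as an elementary fact in the Appendix and then immediately used to prove the subsequent lemma about the singular value of a rank-$1$ matrix. So there is nothing to compare against, and your elementary rank/kernel argument together with the direct trace computation $\mathrm{Tr}(pq^\top)=\sum_i p_i q_i = p^\top q$ is exactly the kind of justification the paper is implicitly relying on.
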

Based on Lemma~\ref{lem:3}, we introduce Lemma~\ref{lem:2}.
\begin{lemma}
A rank-$1$ real matrix (not necessarily to be square) $M=uv^\top$ has only one nonzero singular value $\sigma_{\max} (M)  = ||u||_2 \cdot ||v||_2$, where $||\cdot||_2$ is the $\ell_2$-norm of a vector, and the Frobenius norm and the trace norm of $M$ are identical, i.e.,
\begin{align}
||M||{_*} = ||M||{_F} = \sigma_{\max} (M)  = ||u||_2 \cdot ||v||_2
\label{eq:equivalence}
\end{align}
\label{lem:2}
\end{lemma}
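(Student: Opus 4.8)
The plan is to prove Lemma~\ref{lem:2} by reducing the computation of singular values of $M = uv^\top$ to the eigenvalue computation of a related rank-$1$ square matrix, so that Lemma~\ref{lem:3} applies directly. Recall that the nonzero singular values of $M$ are the square roots of the nonzero eigenvalues of $M^\top M$ (equivalently $MM^\top$). First I would compute $M^\top M = (uv^\top)^\top(uv^\top) = v u^\top u v^\top = \|u\|_2^2\, vv^\top$, which is a scalar multiple of the rank-$1$ square matrix $vv^\top$. By Lemma~\ref{lem:3} applied with $p = q = v$, the matrix $vv^\top$ has a single nonzero eigenvalue $v^\top v = \|v\|_2^2$ and all others zero; hence $M^\top M$ has the single nonzero eigenvalue $\|u\|_2^2\|v\|_2^2$. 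Taking the square root gives $\sigma_{\max}(M) = \|u\|_2\|v\|_2$, and since $M^\top M$ has rank $1$ (assuming $u,v$ nonzero), this is the only nonzero singular value.

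Next I would assemble the norm identities from the singular value spectrum. The Frobenius norm satisfies $\|M\|_F^2 = \sum_i \sigma_i^2(M) = \sigma_{\max}^2(M) = \|u\|_2^2\|v\|_2^2$ because there is only one nonzero singular value; taking square roots gives $\|M\|_F = \|u\|_2\|v\|_2$. The trace (nuclear) norm is $\|M\|_* = \sum_i \sigma_i(M) = \sigma_{\max}(M) = \|u\|_2\|v\|_2$, again because the sum collapses to a single term. This establishes the chain of equalities in Eq.~\eqref{eq:equivalence}. As an optional cross-check, one can verify $\|M\|_F^2 = \mathrm{Tr}(M^\top M) = \mathrm{Tr}(\|u\|_2^2 vv^\top) = \|u\|_2^2 \mathrm{Tr}(vv^\top) = \|u\|_2^2\|v\|_2^2$ using Eq.~\eqref{eq:tr}, which reproduces the Frobenius identity without invoking the spectral decomposition.

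I do not expect a serious obstacle here; the only subtlety is the degenerate case $u = 0$ or $v = 0$, where $M = 0$ and all three quantities are trivially $0$, so the statement still holds (with the convention $\sigma_{\max} = 0$). If one wants to be careful about the claim that $\|u\|_2\|v\|_2$ is attained as an actual singular value rather than merely an upper bound, the argument above via $M^\top M = \|u\|_2^2 vv^\top$ is exact, not an estimate, so there is nothing to lose. The only mild point worth stating explicitly is that $\mathrm{rank}(M) = 1$ when $u,v \neq 0$, which is why the sums defining $\|M\|_F$ and $\|M\|_*$ each contain exactly one nonzero term; this is immediate since every column of $M$ is a scalar multiple of $u$.
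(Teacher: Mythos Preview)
Your proof is correct and follows essentially the same approach as the paper: compute $M^\top M = \|u\|_2^2\, vv^\top$, invoke Lemma~\ref{lem:3} to extract the single nonzero eigenvalue $\|u\|_2^2\|v\|_2^2$, take the square root to obtain $\sigma_{\max}(M)$, and then observe that both $\|M\|_F$ and $\|M\|_*$ collapse to this single singular value. Your handling of the degenerate case $u=0$ or $v=0$ and the trace cross-check are minor additions beyond what the paper presents, but the core argument is identical.
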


\begin{proof}
We use $M^H$ to represent the conjugate transpose of the $M$ matrix, and $\lambda (\cdot)$ to represent the eigenvalues of a square matrix, and $\lambda(\cdot)$ to represent the nonzero eigenvalue of a matrix. Then we have
\begin{align}
\lambda ({M^{\rm{H}}}M) 
\nonumber
&= \lambda (v{u^ \top }u{v^ \top }) \\
&= ({u^ \top }u)\lambda (v{v^ \top }) 
\end{align}
From Lemma~\ref{lem:3}, we know that $\lambda (vv^\top)$ are $\{ {v^ \top }v,0,0, \cdots \}$,
and thus $M$ has only one nonzero singular value ${\sigma _{\max }}(M)$, which is
\begin{align}
{\sigma_{\max } }(M) 
\nonumber
&= \sqrt {\lambda ({M^{\rm{H}}}M)} \\ 
\nonumber
&= \sqrt {{\lambda }(v{u^ \top }u{v^ \top })}  \\
\nonumber
&= \sqrt {({u^ \top }u){\lambda}(v{v^ \top })} \\
\nonumber
&= \sqrt {({u^ \top }u)({v^ \top }v)}   \\
\nonumber
&= ||u||_2 \cdot ||v||_2,
\end{align}
and all other singular values of $M$ are $0$. Thus $||M||_* = ||M||_F = ||u||_2 \cdot ||v||_2$, which completes the proof.
\end{proof}

Based on Lemma~\ref{lem:2}, we now show the derivation of Eq.~\eqref{eq:sto4}.
To tackle the following trace norm minimization formulation,
\begin{align}
{\omega _i} = \arg \mathop {\min }\limits_\omega  ||\omega {\rho _i}{\phi _i}\Delta \phi _i^ \top  - {\phi _i}\phi _i^ \top ||{_*},
\label{eq:tracemin}
\end{align}
we need to utilize the structure of the rank-$1$ matrices. We have
\begin{equation}
\omega {\rho _i}{\phi _i}\Delta \phi _i^ \top  - {\phi _i}\phi _i^ \top  = {\phi _i}{(\omega {\rho _i}\Delta {\phi _i} - {\phi _i})^ \top },
\end{equation}
we denote $q_i(\omega) := {(\omega {\rho _i}\Delta {\phi _i} - {\phi _i})^ \top }$, and thus we have
\begin{align}
\nonumber
\omega_i &=
\arg \mathop {\min }\limits_\omega  ||{\phi _i}{q_i^\top}(\omega )||{_*}\\
\nonumber 
&= \arg \mathop {\min }\limits_\omega  ||{\phi _i}||_2 \cdot ||q_i(\omega)||_2 \\
&= \arg \mathop {\min }\limits_\omega  ||q_i(\omega)||_2 
\end{align}
The second equality comes based on Eq.~\eqref{eq:equivalence}, and the third equality is based on the fact that $||{\phi _i}||_2$ does not depend on $\omega$.
This is equivalent to the following,
\begin{equation}
\omega_i = \arg \mathop {\min }\limits_\omega  || \omega {\rho _i}\Delta {\phi _i} - {\phi _i}||^2_2
\label{eq:tr}
\end{equation}
On the other hand, if we use $||\cdot||^2_F$ instead of trace norm minimization as in Eq.~\eqref{eq:tracemin}, we have
\begin{align}
{\omega _i} =\arg \mathop {\min }\limits_\omega  ||{\phi _i}{q_i}(\omega )||_F^2,
\label{eq:fromin}
\end{align}
And since 
\begin{align}
||{\phi _i}{q^ \top }_i(\omega )||_F^2 
\nonumber 
&= {\rm{Tr}}({q_i}(\omega ){\phi ^ \top _i }{\phi _i}{q^ \top _i}(\omega ))\\ 
\nonumber 
&= ({\phi ^ \top _i}{\phi _i}){\rm{Tr}}({q_i}(\omega ){q^ \top _i}(\omega ))\\
\nonumber 
&= ({\phi ^ \top _i}{\phi _i})({q^ \top _i}(\omega ){q_i}(\omega ))\\
&=  ||{\phi _i}||_2^2||{q_i}(\omega )||_2^2.
\end{align}
The first equality comes from that for a matrix $M$, there is
\begin{align}
||M||_F^2 = {\rm{Tr}}({M^H}M).
\end{align}
The third equality comes from Eq.~\eqref{eq:tr}. Then we can see that problem~\eqref{eq:fromin} is also equivalent to Eq.~\eqref{eq:tr}, as verified by Lemma~\ref{lem:2}. 

By taking the gradient of the right hand-side of Eq.~\eqref{eq:tr}, we will have Eq.~\eqref{eq:sto4} as the final result.

\end{document}